\newtheorem{myDef}{Definition}
\newtheorem{myProb}{Problem}
\newtheorem{myProp}{Proposition}
\title{\LARGE \bf 
Online Synthesis of Control Barrier Functions with Local Occupancy Grid Maps for Safe Navigation in Unknown Environments}
\author{Yuepeng Zhang, Yu Chen, Yuda Li, Shaoyuan Li and Xiang Yin% <-this % stops a space
\thanks{This work was supported by the National Natural Science Foundation of China (62061136004, 62173226, 61803259). 
 }
\thanks{Y. Zhang, Y. Chen, Y. Li, S. Li and X. Yin 
are with the Department of Automation, Shanghai Jiao Tong University, and the Key Laboratory of System Control and Information Processing, the Ministry of Education of China, Shanghai 200240, China. 
{\tt  E-mail: \{singmal,yuchen26,yuda.li,syli,yinxiang\}@sjtu.edu.cn}.}
}
\begin{document}

\maketitle
\thispagestyle{empty}
\pagestyle{empty}
\setlength{\abovecaptionskip}{0pt}
\setlength{\belowcaptionskip}{3pt}
\setlength{\textfloatsep}{0pt}

%%%%%%%%%%%%%%%%%%%%%%%%%%%%%%%%%%%%%%%%%%%%%%%%%%%%%%%%%%%%%%%%%%%%%%%%%%%%%%%%
\begin{abstract}
Control Barrier Functions (CBFs) have emerged as an effective and non-invasive safety filter for ensuring the safety of autonomous systems in dynamic environments with formal guarantees. However, most existing works on CBF synthesis focus on fully known settings. Synthesizing CBFs online based on perception data in unknown environments poses particular challenges. Specifically, this requires the construction of CBFs from high-dimensional data efficiently in real time.  
This paper proposes a new approach for online synthesis of CBFs directly from local Occupancy Grid Maps (OGMs). Inspired by steady-state thermal fields, we show that the smoothness requirement of CBFs corresponds to the solution of the steady-state heat conduction equation with suitably chosen boundary conditions. By leveraging the sparsity of the coefficient matrix in Laplace's equation, our approach allows for efficient computation of safety values for each grid cell in the map.  
Simulation and real-world experiments demonstrate the effectiveness of our approach. Specifically, the results show that our CBFs can be synthesized in an average of milliseconds on a \(200 \times 200\) grid map, highlighting its real-time applicability.
\end{abstract}
% \begin{keywords}
% Safety, Control Barrier Functions,  Robot Navigation, Perception-based Control
% \end{keywords}
%%%%%%%%%%%%%%%%%%%%%%%%%%%%%%%%%%%%%%%%%%%%%%%%%%%%%%%%%%%%%%%%%%%%%%%%%%%%%%%%
\section{Introduction}
Safety is one of the fundamental concerns in autonomous systems, such as mobile robots. 
For example, in search-and-rescue scenarios, robots must navigate through a priori unknown environments while avoiding collisions with obstacles. Similarly, in human-robot collaboration, robots need to interact safely with humans, who may be moving dynamically. 
The problem of safe robot navigation has a long-standing research history and has led to the development of many successful techniques, such as artificial potential fields \cite{khatib1986real}, the dynamic window approach \cite{fox1997dynamic}, and deep learning-based methods \cite{nguyen2024uncertainty, jacquet2024n}.  
Although many of these techniques have proven practically successful, they often lack formal safety guarantees. 
Such guarantees are essential for \emph{safety-critical systems}, as they ensure safety under all conditions, regardless of the behavior of the environment or the actions of dynamic obstacles.

Recently, control barrier functions (CBFs) have emerged as a formal approach for ensuring and certifying safety in safety-critical systems \cite{ames2017control,ames_control_2019,xiao2023safe}. 
The core idea of this approach is to define a safe set using a continuously differentiable function, known as the CBF, such that forward invariance of the safe set can be rigorously guaranteed. 
Over the past years, CBF techniques have progressed significantly, with the development of many variants, such as high-order CBFs \cite{xiao2022high,tan2022high}, adaptive CBFs \cite{taylor2020adaptive,xiao2022adaptive}, robust CBFs \cite{buch2021robust}, and input-to-state safe CBFs \cite{kolathaya2018input,alan2021safe}. Additionally, CBFs have been successfully applied to a wide range of autonomous systems in safety-critical scenarios, including unmanned aerial vehicles \cite{tayal_control_2024}, legged robots \cite{grandia_multi-layered_2021}, wheeled robots \cite{black2024cbfkit}, and multi-robot systems \cite{wang2017safety}.

Most of the aforementioned works on CBFs focus on fully known or static settings, where the positions of obstacles are fixed and known a priori. However, in many real-world applications, such as search-and-rescue missions or human-robot collaboration, obstacles are unpredictable and can change dynamically in real-time. As a result, it becomes necessary to synthesize CBFs on-the-fly based on real-time perception data to adapt to unknown and dynamic environments. 
These online perception-based settings introduce new challenges to the synthesis of CBFs. First, the CBFs needs to be constructed directly from sensor data, which captures dynamic obstacle information. Such perception data are often high-dimensional or multi-modal, requiring efficient processing to extract relevant safety-critical information. Additionally, due to the online nature, the synthesis of CBFs must be performed in a lightweight and  efficient manner to ensure real-time computation within each control period. 

To synthesize CBFs from real-time sensor data, several approaches have been proposed recently, including the use of support vector machines \cite{srinivasan_synthesis_2020}, sparse Bayesian learning \cite{mizuta_safe_2022}, and Gaussian process regression \cite{keyumarsi_lidar-based_2024}. However, these methods generally exhibit high computational complexity and often fail to meet real-time requirements when processing large volumes of sensor data.   
To address the challenge of computational efficiency, deep neural networks have emerged as a promising alternative for online CBF synthesis. For instance, in \cite{abdi_safe_2023}, the authors utilize conditional Generative Adversarial Networks (cGANs) to map front-view RGB-D images directly to CBFs. Similarly, in \cite{long_learning_2021}, an online incremental training method is proposed, which employs replay memory and a deep neural network to approximate Signed Distance Functions (SDFs) from LiDAR data. While these neural network-based methods demonstrate efficiency in real-time computation, they often require extensive pre-training on large datasets and may struggle to adapt quickly to changes in new or evolving environments.

% Traditional Control Barrier Function research often assumes fully known settings, which is rarely the case in practice. To address this, numerous CBF methods have been proposed to handle model uncertainties \cite{castaneda_pointwise_2021, taylor_learning_nodate} or unknown environments \cite{srinivasan_synthesis_2020, mizuta_safe_2022, keyumarsi_lidar-based_2024, abdi_safe_2023, long_learning_2021, jian_dynamic_2022, zhang_online_2024, raja_safe_2024}, enhancing safety for autonomous robots. This paper specifically focuses on methods that tackle the challenges of operating in unknown environments. 
% In these studies, robots rely on sensors such as cameras and LiDAR to acquire real-time environmental information, which is then used to synthesize CBFs online and estimate safety values for any given location within the environment.

To better address the real-time computation requirements for online CBF synthesis,   the authors in \cite{jian_dynamic_2022,zhang_online_2024} propose a method where point-cloud data is first clustered, and each obstacle is represented by a minimal bounding ellipse (MBE). 
A separate CBF is then constructed analytically based on the MBE representation of the obstacle. 
However, this MBE-based approximation can be overly conservative, particularly when obstacles are non-convex.
Recently, in \cite{raja_safe_2024}, the problem of online synthesis of CBFs from occupancy grid maps  has also been studied. 
The approach involves designing CBFs based on SDFs. However, SDFs are not differentiable everywhere, which poses a challenge for ensuring the smoothness of CBFs derived from them. As a result, careful consideration is required, often involving advanced interpolation and smoothing techniques to address non-differentiable regions and ensure the resulting CBFs are sufficiently smooth for safe control synthesis.  

In this paper, we address the challenge of synthesizing  CBFs  in real-time based on perception data in the form of a local occupancy grid map (OGM). Inspired by steady-state thermal fields in physics, we propose a novel approach for synthesizing CBFs from OGMs that achieves a better tradeoff between conservatism, adaptability, and computational efficiency.  
The core idea of our approach is to characterize the requirements of CBFs using the Laplace's heat conduction equation in steady-state thermal fields. Two boundary conditions are employed to represent obstacles and absolute safe regions. 
Additionally, we provide an efficient numerical method to synthesize the CBF without explicitly solving the partial differential equation by leveraging the sparsity of the coefficient matrix in the Laplace's equation.   
Compared to the MBE-based approach, our method requires only a single CBF constraint to ensure safety during robot navigation, regardless of the number or shape of obstacles in the environment, which makes our approach less conservative. 
Compared with the SDF-based approach, our method avoids the need for complex interpolation and smoothing techniques, ensuring smoother and more computationally efficient CBFs.   
Simulations and real-world experiments are conducted to validate the effectiveness and real-time efficiency of our approach. The experimental results demonstrate that our method can synthesize  CBFs  in milliseconds for 200 \(\times\) 200 occupancy grid maps (OGMs). This highlights the practicality and scalability of our approach for real-time safe control in dynamic  environments.

The organization of this paper is as follows: First, Section \ref{section:preliminaries} reviews the foundational theories of CBFs. Next, Section \ref{section:problem} outlines the problem statement. In Section \ref{section:SSTF-CBF}, a novel CBF inspired by steady-state thermal fields is proposed. Following that, Section \ref{section:experiment} evaluates the performance of this method through simulations and real-world experiments. Finally, Section \ref{section:conclusion} concludes the paper and explores potential directions for future research.

\section{Preliminaries}
\label{section:preliminaries}
In this work, we consider a mobile robot modeled by an affine nonlinear control system:
\begin{equation}\label{eq:dynamics}
\dot{\mathbf{x}} = \mathbf{f}(\mathbf{x})+\mathbf{g}(\mathbf{x})\mathbf{u}, 
\end{equation}
where 
$\mathbf{x} \in \mathcal{X} \subseteq \mathbb{R}^n$ is the system state, 
$\mathbf{u} \in \mathcal{U} \subseteq \mathbb{R}^m$ is the control input, 
$\mathbf{f}: \mathbb{R}^n \to \mathbb{R}^n$ and $\mathbf{g}: \mathbb{R}^n \to \mathbb{R}^{n \times m}$ are locally Lipschitz continuous functions representing the dynamic of the systems. 
We assume that the state space is partitioned as
\begin{equation}
\mathcal{X}=\mathcal{X}_{\text{obs}}\dot{\cup}\mathcal{X}_{\text{free}}, 
\end{equation}
where $\mathcal{X}_{\text{obs}}$ is a closed subset representing the obstacle region the robot must avoid, and  
$\mathcal{X}_{\text{free}}$ is an open subset representing the safe region where the robot is free to navigate.

To ensure the safety of the system,  
our objective is to identify a \emph{safe set} $\mathcal{C} \subseteq \mathcal{X}_{\text{free}}$ such that the robot's motion remains within $\mathcal{C}$, thereby guaranteeing safety. To achieve this, the set $\mathcal{C}$ must be forward invariant.

\begin{myDef}[Forward Invariant Sets \cite{ames_control_2019}]\upshape
A subset $\mathcal{C}\subseteq \mathcal{X}_{\text{free}}$ is said to be \emph{forward invariant} if, 
for any initial state $\mathbf{x}(t_0) \in \mathcal{C}$, there exists a control sequence $\mathbf{u}(t)\in \mathcal{U}$ such that  $\mathbf{x}(t) \in \mathcal{C}$ for all future times $t \geq t_0$. 
% A system is considered safe with respect to the safe set $\mathcal{C}$ if the set $\mathcal{C}$ is forward invariant.
\end{myDef}

The safe set $\mathcal{C}$ is typically characterized as the 0-superlevel set of a continuously differentiable safety function $h(\mathbf{x}): \mathbb{R}^n \to \mathbb{R}$. Specifically, the safe set is defined as follows:
\begin{align}
\label{eq:safe set}
\mathcal{C} &= \{ \mathbf{x} \in  \mathbb{R}^n \mid h(\mathbf{x}) \geq 0 \},\\
\label{eq:safe boundary}
\partial \mathcal{C} &= \{ \mathbf{x} \in \mathbb{R}^n \mid h(\mathbf{x}) = 0 \},\\
\label{eq:unsafe set}
\text{Int}(\mathcal{C}) &= \{ \mathbf{x} \in \mathbb{R}^n \mid h(\mathbf{x}) > 0 \}.
\end{align}
Specifically,  
when $h(\mathbf{x})$ is a \emph{control barrier function}, the forward invariance of $\mathcal{C}$ can be ensured by synthesizing safe control inputs at each time instant.

%\XY{why you refer to [20] not the original one?}
\begin{myDef}[Control Barrier Functions \cite{li2023robust}]\label{def:cbf}\upshape
Consider a system defined in \eqref{eq:dynamics}.  
A continuously differentiable function $h: \mathbb{R}^n \to \mathbb{R}$ is said to be a \emph{control barrier function} if, for any state $\mathbf{x} \in \mathcal{C}$, we have
\begin{align}
\label{eq:constraint}
\sup_{\mathbf{u}\in \mathcal{U}}[L_{\mathbf{f}}h(\mathbf{x})+L_{\mathbf{g}}h(\mathbf{x})\mathbf{u}] \geq -\alpha(h(\mathbf{x})),
\end{align}
where $\alpha:\mathbb{R}\rightarrow \mathbb{R}$ is an extended class $\mathcal{K}_{\infty}$ function, i.e., strictly increasing with $\alpha(0)=0$.
\end{myDef}
The set of control inputs that renders $\mathcal{C}$ safe is given by 
\begin{align}
K_{cbf}(\mathbf{x}) = \{\mathbf{u} \in \mathcal{U}: L_{\mathbf{f}}h(\mathbf{x})\!+\!L_{\mathbf{g}}h(\mathbf{x})\mathbf{u} \geq\! - \alpha(h(\mathbf{x}))\}.
\end{align}
That is, when $h(\mathbf{x})$ is a control barrier function, any control input $u \in K_{\text{cbf}}(\mathbf{x})$ ensures the safety of the system.

\section{Problem Statement}\label{section:problem}
When the environment is static and known a priori, e.g., the robot is aware of the positions of all obstacles that remain unchanged during operation, a control barrier function can be synthesized offline. 
Then this pre-computed static control barrier function can then be utilized online to generate safe control inputs.
However, in many scenarios, control barrier functions must be synthesized on-the-fly based on the  real-time perception data. 
Examples of such scenarios include:  
\begin{itemize}  
    \item
    \emph{Exploration in Unknown Terrains}: The environment is initially unmapped, and the robot must simultaneously build a map of the surroundings and compute safe control inputs on-the-fly.  
    \item 
    \emph{Human-Robot Collaborations}: The obstacles, such as humans, are dynamic rather than fixed. As a result, the robot must continuously adapt in real-time to the movements of human operators to ensure safe interaction.  
\end{itemize}  
In these cases, the ability to synthesize control barrier functions on-the-fly is essential for ensuring safe and adaptive operation.
 
In this work, we consider the setting where the robot relies on real-time perception data in the form of a \emph{local occupancy grid map} (OGM). To be more specific, we consider a mobile robot navigating within an 
$x$-$y$ plane, with its kinematic model described by \eqref{eq:dynamics}. The state of the robot is defined by 
$\mathbf{x} = (x, y, \theta)$, where $\mathbf{p}=(x, y)$ represents its coordinates on the plane and $\theta$ denotes its orientation angle.
We assume that the robot can continuously and precisely determine its state, for instance, by utilizing odometry or GPS signals. Additionally, the robot is equipped with local sensing capabilities, enabling it to construct a local occupancy grid map centered around its position at each time step. Such local perception can be achieved, for example, through Bird’s Eye View (BEV) images captured by cameras, combined with semantic segmentation techniques, or via LiDAR systems. 
Specifically, the OGM obtained by the robot at time instant $t$ is represented as a binary matrix:
\begin{equation}
M_t \in \{0,1\}^{H \times W},
\end{equation} 
where $M_t(i,j) = 1 $ indicates that the grid cell  $(i,j) $ in the  $ H \times W $ map is occupied by an obstacle, and  $0 $ indicates that the cell is free of obstacles.
This scenario of local OGM is illustrated in Figure \ref{fig: problem}.  
\begin{figure}[t]
  \centering
  \includegraphics[scale=0.25]{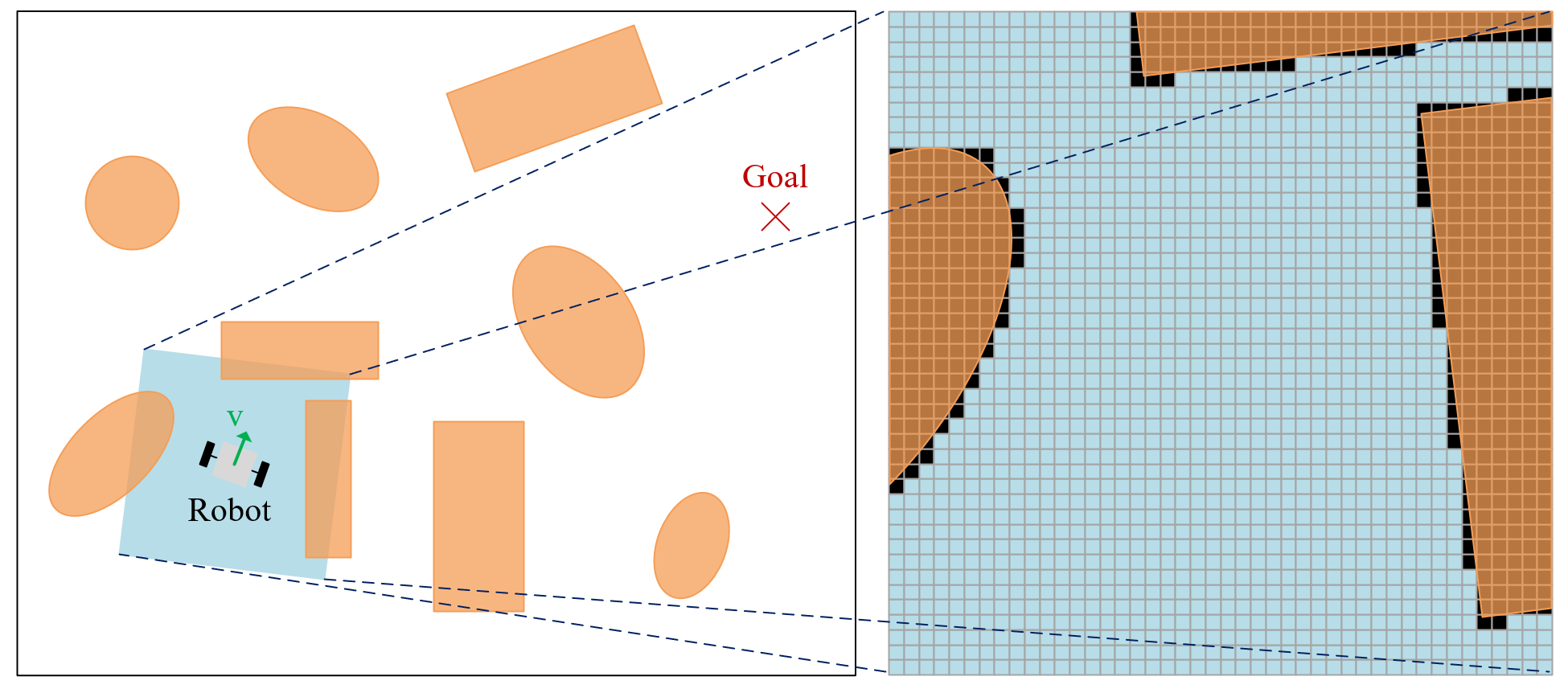}
  \caption{(\textbf{Left}) The robot's navigation environment, where orange regions denote obstacles and the blue region represents the robot's local sensing range. (\textbf{Right}) The real-time occupancy grid map generated during navigation, where black grids indicate obstacle-occupied areas while blue grids correspond to free space.}
  \label{fig: problem}
\end{figure}

In robot navigation tasks, a high-level controller, such as proportional control, model predictive control (MPC) or manual control through teleoperation, are used to generates nominal control inputs $u_{\text{nom}}(t)$ to guide the robot towards a specified goal $\mathbf{p}_{goal}$. 
However, the robot's workspace $\mathcal{W} \subseteq \mathbb{R}^2$ contains static or dynamic obstacles of unknown quantity and shape, denoted as $O_1, \dots, O_N$, which could result in potential collisions if only $u_{\text{nom}}(t)$ is applied. 
To prevent collisions and ensure safe navigation, the robot's position $\mathbf{p}=(x, y)$ must avoid entering obstacle regions $O =\bigcup_{i=1}^{N}O_i$. Since the robot's position $\mathbf{p}=(x, y)$ is the primary factor for ensuring safety in a 2D plane, we adopt the widely used 2D single integrator dynamics for analysis. This choice is motivated by the model's simplicity and its ability to capture the essential dynamics of position-based safety constraints. For higher-dimensional system models, extensive research has shown that constructing CBFs for simlified models is an effective approach. These CBFs can then be optimized to ensure the safety of more complex full-order models (FOMs) \cite{cohen2024safety, molnar2021model, molnar2023safety}. For example, in \cite{cohen2024safety2}, the single integrator dynamics, as a reduced-order model (ROM), is utilized to design CBF constraints. The resulting safe control inputs are subsequently mapped back to the FOM, transferring safety guarantees to a 3D hopping robot. In our experiments, we also successfully mapped the safe control inputs from the single integrator dynamics back to a unicycle model, achieving safe obstacle avoidance during navigation. 

%\XY{Why you define CBF on the projected space? This is not consistent with the definition}
To ensure safety, a key requirement is to find a continuously differentiable CBF $h(\mathbf{p}): \mathbb{R}^2 \to \mathbb{R}$ that is synthesized by $M_t$. This CBF partitions the $x-y$ plane into a safe set $\mathcal{C}=\{\mathbf{p}\in\mathcal W \mid  h(\mathbf{p})\geq 0\}$ and an unsafe set $\mathcal{C}_u=\mathcal W \setminus \mathcal{C}$, where the obstacle region $O$ belongs to the unsafe set, i.e., $O \subseteq \mathcal{C}_u$. When the robot starts from a position within the safe set ($\mathbf{p}(0) \in \mathcal{C}$), imposing constraints on the control inputs according to \eqref{eq:constraint} ensures that the robot avoids collisions with obstacles throughout its motion. Furthermore, as the robot moves and the surrounding environment may dynamically change, the CBF must be capable of updating in real time or being synthesized online to maintain safety. The problem addressed in this paper can be described as follows.

\begin{myProb}
Consider a robot performing navigation tasks in an environment with unknown obstacles, where its kinematic model is described by \eqref{eq:dynamics}. 
Design a CBF \( h(\mathbf{p}) \) that can be updated in real time or synthesized online based on the OGM \( M_t \) acquired by the robot.
\end{myProb}

\section{Steady-State Thermal Field-Inspired CBFs}
\label{section:SSTF-CBF}
A common approach to addressing the above online synthesis problem involves encapsulating each obstacle with circles or ellipses, and employing distance functions as CBFs. However, this method requires designing a separate CBF constraint for each obstacle, which can be overly conservative and computationally intensive. To overcome these limitations, we draw inspiration from the physical model of steady-state thermal fields. By solving the associated partial differential equations, we propose a method to synthesize a unified CBF that works for any number or shape of obstacles, ensuring that only a single CBF constraint needs to be satisfied to guarantee safety. 

\subsection{Safety Function Synthesis}
\label{section:CBF-Synthesis}
Determining a steady-state thermal field requires defining clear boundary conditions. These boundary conditions are necessary for solving the safety function and ensure $h(\mathbf{p})$ exhibits appropriate properties throughout the workspace.  First, we define a safety region $S \subseteq \mathcal{W}$ by
\begin{equation}
    S=\{ p \in \mathcal{W} \mid \inf_{q \in O} \| p-q \| \geq \delta \},
\end{equation}
where $\| \cdot \|$ denotes the Euclidean distance and $\delta$ is a user-defined safety margin. Next, we model the boundary of the obstacle region $\partial O$ and the boundary of the safety region $\partial S$ as heat sources with fixed temperatures of $-a$ and $b$, respectively, where $a,b>0$. 
This ensures that after reaching steady-state, the temperature (or safety value) is negative near obstacles and increases as we move away from them towards the safety region. 
We define this gradually changing area as the transition region $\mathcal{T}=\mathcal{W}\setminus (O \cup S)$. Additionally, the boundary $\partial \mathcal W$ of the workspace is treated similarly to $\partial S$, as we want the safety function to remain valid across the entire workspace. Formally, we define: 
\begin{align} \label{eq:CBFboundary}
B(\mathbf{p}) = 
\begin{cases} 
-a, & \quad \mathbf{p} \in \partial O \\
\ \ b, & \quad \mathbf{p} \in \partial S \cup \partial \mathcal W
\end{cases},
\end{align}
where $B(\mathbf{p})$ acts as a Dirichlet boundary condition that imposes fixed safety values at the boundaries.

This setup induces a temperature gradient, driving heat conduction from higher to lower temperatures until steady-state is reached. The steady-state temperature distribution satisfies Laplace's equation \cite{hahn2012heat}:
\begin{align}
\label{eq:laplace_equation}
\nabla^2 h(\mathbf{p}) = \frac{\partial^2 h(\mathbf{p})}{\partial x^2} + \frac{\partial^2 h(\mathbf{p})}{\partial y^2} =0.
\end{align}
In this context, $h(\mathbf{p})$ is the solution to \eqref{eq:laplace_equation} over $\mathcal{W}$ with boundary condition \eqref{eq:CBFboundary}. We term this the steady-state thermal field-inspired control barrier function (SSTF-CBF). Figure \ref{fig:SSTF-CBF}, as an example, shows the distribution of obstacles and the corresponding color map of the generated $h(\mathbf{p})$. This method offers several advantages, including a clear physical interpretation, the ability to adapt to complex boundaries, and the guarantee that the safety function $h(\mathbf{p})$ is infinitely differentiable where the Laplace's equation holds \cite{axler2001basic}.
\begin{figure}[t]
  \centering
   \includegraphics[scale=0.28]{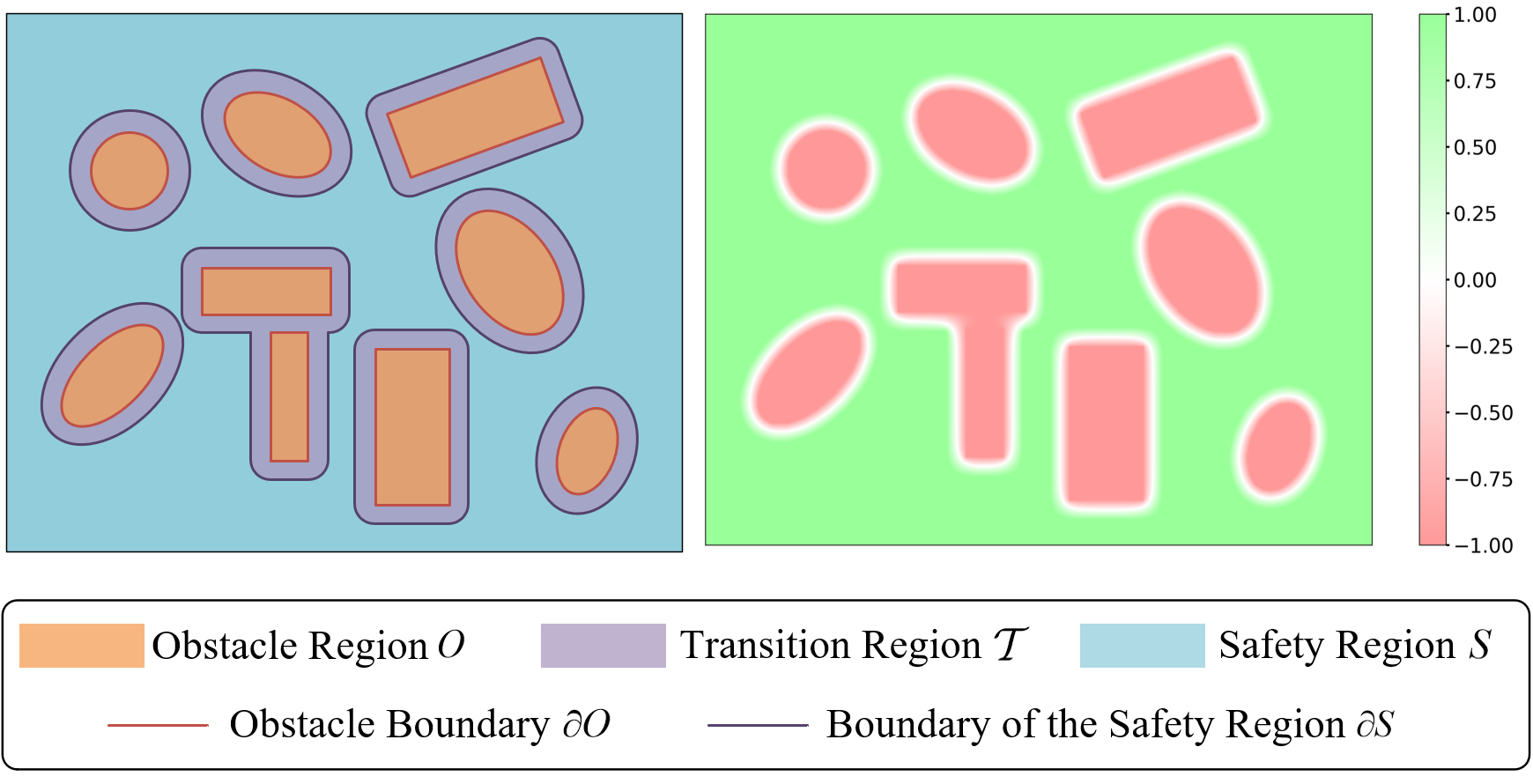}
  \caption{Obstacle distribution and color map of the corresponding steady-state thermal field safety function $h(\mathbf{p})$, computed with $a=b=1$.}
  \label{fig:SSTF-CBF}
\end{figure}
\label{section:CBF-Control}
%\subsection{SSTF-CBF Online Synthesis}

Next, we will demonstrate how the sparsity of the Laplace equation's sparse matrix enables the efficient synthesis of $h(\mathbf{p})$ on the OGMs. Suppose the robot generates a $H \times W$ grid map $M$ to represent obstacle occupancy, we first identify the boundaries of the obstacle region $\partial O$ (from $M$) and the safety region $\partial S$ (via dilation of $M$). These boundaries are treated as heat sources with fixed temperatures:$-a$ for $\partial O$ and $b$ for $\partial S$. In steady state, temperatures in regions $O$ and $S$ stabilize at $-a$ and $b$, respectively, due to their exclusive connections to single-valued heat sources. Therefore, we only need to compute the safety values within the transition region $\mathcal{T}=M\setminus (O \cup S)$. The safety values for each grid cell within $\mathcal{T}$ satisfy the second-order difference form of Laplace's equation \eqref{eq:laplace_equation}:
\begin{align}
\label{eq:differ_form}
h_{i,j} = \frac{h_{i+1,j}+h_{i-1,j}+h_{i,j-1}+h_{i,j+1}}{4},
\end{align}
where $h_{ij}$ represents the safety value of the grid cell at the $i$-th row and $j$-th column in the map $M$. Assuming there are $N$ grid cells within the region $\mathcal T$, each requiring a calculation of its safety value, we can formulate $N$ Laplace's equations. Arranging the unknown safety values into a vector yeilds $\textbf{h} = [h_{1j_{11}}, h_{1j_{12}}, ..., h_{1j_{1n_1}},h_{2j_{21}},...,h_{2j_{2n_2}}, ...,h_{Hj_{Hn_{H}}}]^\top$,
where $n_i$ denotes the number of grid cells in the $i$-th row of map $M$ that are within the region $\mathcal T$, and $j_{ab}$ represents the column index of the 
$b$-th grid cell in the $a$-th row that lies in $\mathcal T$. Notably, $n_1+n_2+...+n_H=N$. This allows us to express the system of N linear equations in matrix form:
\begin{align}
A\textbf{h} = \textbf{b},
\label{eq:Ax=b}
\end{align}
where $A \in \mathbb{R}^{N \times N}$, $\text{diag}(A) = 4 \cdot \mathbf{1}^N$, and $\mathbf{h}, \mathbf{b} \in \mathbb{R}^N$. 
\begin{figure}[t]
  \centering
  \includegraphics[scale=0.5]{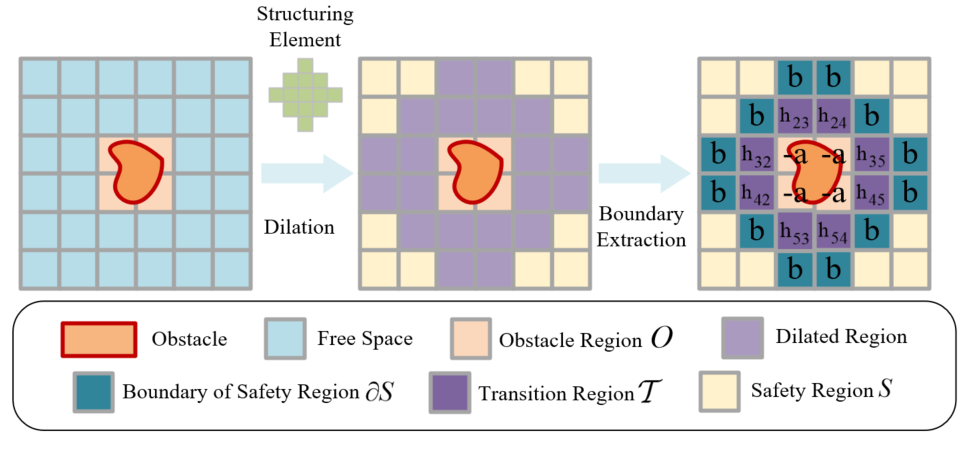}
  \caption{A simple example of the domain definition for our SSTF-CBF is provided, clearly displaying the boundary conditions and the safety values that need to be calculated in transition domain $\mathcal{T}$. In this example, region $O$ and $\partial O$ are identical.} 
  \label{fig:example}
\end{figure}
A simple example can be found in Figure \ref{fig:example}, where the Laplace's equations can be explicitly written as:
\begin{align}
\setlength{\arraycolsep}{0.8pt}
\begin{bmatrix}
    4 & -1 & 0 & 0 & 0 & 0 & 0 & 0 \\ 
    -1 & 4 & 0 & 0 & 0 & 0 & 0 & 0 \\ 
    0 & 0 & 4 & 0 & -1 & 0 & 0 & 0 \\ 
    0 & 0 & 0 & 4 & 0 & -1 & 0 & 0 \\ 
    0 & 0 & -1 & 0 & 4 & 0 & 0 & 0 \\ 
    0 & 0 & 0 & -1 & 0 & 4 & 0 & 0 \\ 
    0 & 0 & 0 & 0 & 0 & 0 & 4 & -1 \\ 
    0 & 0 & 0 & 0 & 0 & 0 & -1 & 4
\end{bmatrix}
\begin{bmatrix}
    h_{23}\\
    h_{24}\\
    h_{32}\\
    h_{35}\\
    h_{42}\\
    h_{45}\\
    h_{53}\\
    h_{54}
\end{bmatrix}=
\begin{bmatrix}
    2b-a\\
    2b-a\\
    2b-a\\
    2b-a\\
    2b-a\\
    2b-a\\
    2b-a\\
    2b-a
\end{bmatrix}.
\end{align}
The coefficient matrix $A$ is constructed based on adjacency relationships among grid elements, resulting in a symmetric matrix with a diagonal of $4$ and off-diagonal elements of $-1$ or $0$. As the size of the grid map $M$ grows, the number of unknown safety values $N$ increases, potentially leading to a very high-dimensional matrix $A$. However, from \eqref{eq:differ_form}, each Laplace equation involves at most five neighboring grid values, making $A$ sparse with no more than five non-zero elements per row. This sparsity allows for efficient solving using iterative methods, such as GMRES \cite{saad1986gmres} or BiCGSTAB \cite{van1992bi}, which are well-suited for large, sparse linear systems.

\subsection{SSTF-CBF-Based Safe Control}
%\XY{why you consider a very specific model here? Where is $\theta$?}
As described in Section \ref{section:problem}, we consider the robot's motion governed by single integrator dynamics:
\begin{equation}
 \label{eq:simplified model}
\dot{\mathbf{p}} = \mathbf{f}(\mathbf{p})+\mathbf{g}(\mathbf{p})u, 
\end{equation}
where $u=(v_x, v_y)$ denotes the control input, with $v_x$ and $v_y$ representing the robot's velocity components in the horizontal and vertical directions. Assume that $u_{\text{nom}}(\mathbf{p}):\mathbb{R}^2 \rightarrow \mathbb{R}$ represents an nominal velocity controller without considering collisions. After synthesizing the SSTF-CBF, the following quadratic programming (QP) problem can be formulated to design a CBF-based safety controller:
\begin{subequations}
\label{eq:controller}
\begin{align}
u^*(t) = \mathop{\arg\min}\limits_{u\in \mathbb{R}^2}\|u-u_{\text{nom}}(t)\|^2 \\
\text{s.t.} \quad L_{\mathbf{f}}h(\mathbf{p})+L_{\mathbf{g}}h(\mathbf{p})u \geq -\alpha(h(\mathbf{p})),
\label{eq:controller_b}
\end{align}
\end{subequations}
where $L_{\mathbf{f}}h(\mathbf{p})=\frac{\partial h}{\partial \mathbf{p}}\mathbf{f}(\mathbf{p})$ and $L_{\mathbf{g}}h(\mathbf{p})=\frac{\partial h}{\partial \mathbf{p}}\mathbf{g}(\mathbf{p})$ are the  Lie derivatives. The partial derivatives $\frac{\partial h}{\partial \mathbf{p}}$ are computed using the differences between adjacent grid cells. The overall algorithm for safe navigation in unknown environments is presented in Algorithm 1.

\begin{algorithm}  
    \caption{Safe Navigation using SSTF-CBF}
    \KwIn{sensor data: $z$, state of the system: $\mathbf{p}$}
    \KwOut{control input $u(t)$}
    \While{Robot is running}
    {
        Receive sensor data $z$ and system state $\mathbf{p}$\;

        Compute the nominal input $u_{\text{nom}}(t)$ at $\mathbf{p}$\;
        
        Generate occupancy grid map $M_t$\;

        \If{No occupied grid cell in $M_t$} {
            Execute $u_{\text{nom}}(t)$\;
            Continue\;
        }
        
        Extract $\partial O_t$\;
        
        Perform dilation on $O_t$\;

        Extract $\partial S_t$\;

        Construct Laplace's equation based on $\partial O_t$ and $\partial S_t$\;

        Solve the sparse Laplace's equation with GMRES or BiCGSTAB\;
        
        Run optimization in \eqref{eq:controller} to compute the safe control input $u(t)$\;
        Execute $u(t)$\;
    }
\end{algorithm}

\begin{myProp}\label{prop:is cbf}
Given a control system (\ref{eq:simplified model}) and the safe set defined as in (\ref{eq:safe set})-(\ref{eq:safe boundary}), the SSTF-CBF $h(\mathbf{p})$  constructed as in section \ref{section:CBF-Synthesis} is a valid control barrier function.
\end{myProp}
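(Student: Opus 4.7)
The plan is to verify the two requirements of Definition~\ref{def:cbf}: continuous differentiability of $h(\mathbf{p})$, and the sup-inequality (\ref{eq:constraint}) on the safe set $\mathcal{C}$. For the single integrator model (\ref{eq:simplified model}), we have $\mathbf{f}(\mathbf{p}) = \mathbf{0}$ and $\mathbf{g}(\mathbf{p}) = I$, so the Lie derivatives reduce to $L_{\mathbf{f}}h(\mathbf{p}) = 0$ and $L_{\mathbf{g}}h(\mathbf{p}) = \nabla h(\mathbf{p})^{\top}$, which substantially simplifies the analysis.

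For regularity, I would first invoke the interior regularity theorem for harmonic functions: on the open transition region $\mathrm{int}(\mathcal{T})$, $h$ solves Laplace's equation and is therefore real-analytic, in particular $C^{\infty}$. On $\mathrm{int}(O)$ and $\mathrm{int}(S)$, $h$ is constant and trivially smooth, and continuity across the internal boundaries $\partial O$ and $\partial S$ follows directly from the matching Dirichlet values in (\ref{eq:CBFboundary}). Since $\mathcal{C} = \{\mathbf{p} : h(\mathbf{p}) \geq 0\}$ excludes $O$ entirely (as $h = -a < 0$ on $O$), smoothness across $\partial O$ is not required for the CBF property. Across $\partial S$, which lies in the interior of $\mathcal{C}$, the piecewise definition is continuous in value but may exhibit a jump in gradient (the $S$ side has $\nabla h = 0$ while the $\mathcal{T}$ side carries a nonzero Hopf-type normal derivative), and this point needs to be handled carefully.

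Once the regularity is in place, the sup-inequality is almost automatic. For any $\mathbf{p} \in \mathcal{C}$ we have $h(\mathbf{p}) \geq 0$, so $-\alpha(h(\mathbf{p})) \leq 0$ because $\alpha$ is extended class $\mathcal{K}_{\infty}$. Choosing the admissible input $u = \mathbf{0}$ then yields $L_{\mathbf{f}}h(\mathbf{p}) + L_{\mathbf{g}}h(\mathbf{p})\cdot \mathbf{0} = 0$, and hence
\begin{equation*}
\sup_{u\in \mathcal{U}}\bigl[L_{\mathbf{f}}h(\mathbf{p}) + L_{\mathbf{g}}h(\mathbf{p})u\bigr] \geq 0 \geq -\alpha(h(\mathbf{p})),
\end{equation*}
which is precisely condition (\ref{eq:constraint}).

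The main obstacle is therefore not the CBF inequality, which is essentially free once $\mathbf{f} = \mathbf{0}$ and $u = \mathbf{0}$ is admissible, but rather the careful treatment of $C^{1}$ regularity at the internal boundary $\partial S$ inside $\mathcal{C}$. Standard elliptic regularity handles $\mathrm{int}(\mathcal{T})$ cleanly, but justifying a valid gradient at $\partial S$ — whether by smoothing the piecewise construction, by restricting the argument to $\mathrm{int}(\mathcal{C})$, or by invoking a generalized CBF notion that tolerates one-sided derivatives — is where the genuine technical work lies.
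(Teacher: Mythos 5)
Your proof of the inequality~(\ref{eq:constraint}) is exactly the paper's argument: for the single integrator $u=\mathbf{0}$ is admissible, and since $h(\mathbf{p})\geq 0$ on $\mathcal{C}$ implies $-\alpha(h(\mathbf{p}))\leq 0$, the supremum condition holds trivially. Where you diverge is on regularity, and there you are in fact \emph{more} careful than the paper: its proof simply asserts that $h$ is continuously differentiable ``since it satisfies the Laplace's equation,'' which covers only the interior of the transition region $\mathcal{T}$ and says nothing about the interface $\partial S$, which lies inside $\mathcal{C}$. The kink you flag there is genuine --- by the maximum principle $h<b$ in $\mathrm{Int}(\mathcal{T})$, so Hopf's boundary point lemma forces a strictly nonzero one-sided normal derivative on the $\mathcal{T}$ side of $\partial S$, while $\nabla h=\mathbf{0}$ on the $S$ side --- so the glued function is continuous but not $C^1$ across $\partial S$, and the paper leaves this unaddressed. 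Your identification of this as the remaining technical work (smoothing the construction, or weakening the CBF definition to tolerate one-sided derivatives) is therefore not a gap in your proposal relative to the paper; it is a gap in the paper's own proof that your write-up makes explicit. Note only that for this particular proposition the hole is harmless to the conclusion you actually need, since the inequality is verified with $u=\mathbf{0}$ and does not use the gradient at all; the nondifferentiability matters downstream, when $\nabla h$ enters the QP constraint~(\ref{eq:controller_b}).
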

\begin{proof}
First, since $h(\mathbf{p})$ satisfies the Laplace's equation (\ref{eq:laplace_equation}), it is continuously differentiable\cite{hahn2012heat}. Second, for every point $\mathbf{p}$ within the safe set $\mathcal{C}$, $h(\mathbf{p})\geq 0$, which implies $-\alpha(h(\mathbf{p}))\leq 0$. Since we consider single integrator dynamics and $u=(v_x, v_y)$ can be made zero, condition (\ref{eq:constraint}) can always be satisfied. Therefore, according to Definition \ref{def:cbf}, $h(\mathbf{p})$ is a valid CBF. 
\end{proof}

In the study of CBFs, it is typically required that $\forall \mathbf{p} \in \partial{\mathcal C}, \frac{\partial h(\mathbf{p})}{\partial \mathbf{p}}\neq 0$ to ensure CBF constraint \eqref{eq:constraint} remains nontrivial. In our proposed SSTF-CBF, there may be positions where $\frac{\partial h(\mathbf{p})}{\partial \mathbf{p}} = 0$ on $\partial{\mathcal C}$. At such points, all control inputs $u$ satisfies the constraint, potentially causing the robot to execute unsafe high-level commands that may drive it into a dangerous region where $h(\mathbf{p})<0$. However, unlike other CBFs that guarantee safety only when $h(\mathbf{p}) \geq 0$, our safety function ensures that the robot remains safe even when $-a<h(\mathbf{p})<0$. Furthermore, since our $h(\mathbf{p})$ is a non-constant harmonic function, all critical points satisfying $\frac{\partial h(\mathbf{p})}{\partial \mathbf{p}} = 0$ are isolated \cite{manfredi1988p}. We will use these properties to prove that even when the gradient of the CBF vanishes at some boundary points, our SSTF-CBF can still guarantee the safety of the robot's motion.

We define the unsafe region $\mathcal{U} =  
\{p \in \mathcal{W} \mid h(p) \leq 0\}$, which is the 0-sublevel set of $h(\mathbf{p})$. Since $h(\mathbf{p})$ is a continuous function, the set $\mathcal{U}$ is necessarily closed. According to the steady-state thermal field constructed in Section \ref{section:CBF-Synthesis},  $h(\mathbf{p}) = b$ exists for all position $\mathbf{p}$ within the safe region $S$. $h(\mathbf{p})\leq0$ occurs only in the bounded region  $D=\mathcal{W}\setminus S=\{p\in\mathcal{W} \mid  \inf_{q\in O}\|p-q\|< \delta\}$.
Therefore, $\mathcal{U}$ must be a bounded closed set (a compact set). As shown in Figure \ref{fig:proof}, assume that the robot enters a connected component $\mathcal{U}_c$ of the unsafe region $\mathcal{U}$ through a boundary point $p_0$ where $\frac{\partial h(p_0)}{\partial p_0}=0$. 

\begin{figure}[t]
  \centering
  \includegraphics[scale=0.3]{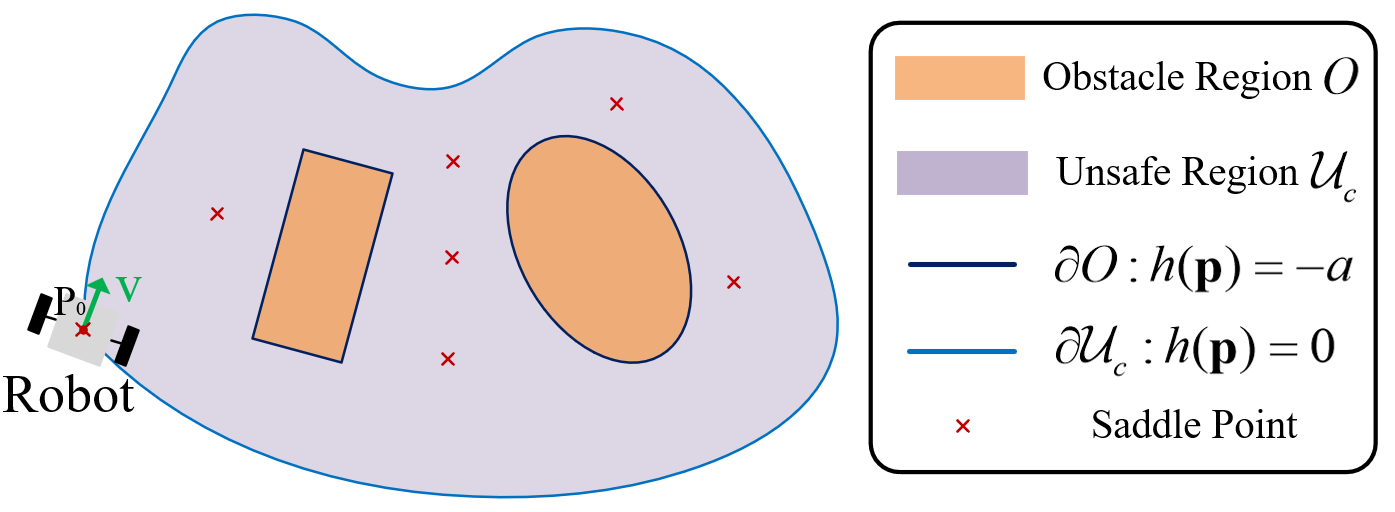}
  \caption{A rough schematic illustrating the robot entering an unsafe region $\mathcal{U}_c$ after passing through $p_0 \in \partial \mathcal{C}$ where the gradient of $h(\mathbf{p})$ vanishes. } 
  \label{fig:proof}
\end{figure}
\begin{myProp}\label{prop:saddle point}
For any compact set $\mathcal{R}\subset \mathbb{R}^2$ contained within the interior of the domain $\mathcal{D}\subset \mathbb{R}^2$ of a non-constant harmonic function $f:\mathcal{D}\to \mathbb{R}$, the number of critical points within $\mathcal{R}$ is finite. 
\end{myProp}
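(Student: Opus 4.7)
The plan is to argue by contradiction using a standard compactness argument, leaning on the already-cited fact that critical points of a non-constant harmonic function are isolated. So suppose the set $C = \{\mathbf{p} \in \mathcal{R} : \nabla f(\mathbf{p}) = 0\}$ is infinite. Since $\mathcal{R}$ is compact in $\mathbb{R}^2$, by the Bolzano--Weierstrass theorem there exists an accumulation point $\mathbf{p}^\star \in \mathcal{R}$ of $C$, i.e.\ a sequence of distinct critical points $\mathbf{p}_k \to \mathbf{p}^\star$. Because $\mathcal{R}$ lies in the interior of $\mathcal{D}$, the point $\mathbf{p}^\star$ is an interior point of the domain on which $f$ is harmonic.

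The next step is to derive a contradiction from the existence of such an accumulation point. By continuity of $\nabla f$ we have $\nabla f(\mathbf{p}^\star) = 0$, so $\mathbf{p}^\star$ is itself a critical point, and every neighborhood of $\mathbf{p}^\star$ in $\mathcal{D}$ contains infinitely many critical points, which means $\mathbf{p}^\star$ is not isolated. This directly contradicts the isolation of critical points for non-constant harmonic functions as cited from \cite{manfredi1988p}. If one wants to keep the proof self-contained in the 2D setting used in the paper, I would briefly sketch the classical argument: for $f$ harmonic on a planar domain, the complex function $g = \partial_x f - i\,\partial_y f$ satisfies the Cauchy--Riemann equations (this is just the harmonicity of $f$ together with the equality of mixed partials), hence $g$ is holomorphic; since $f$ is non-constant, $g$ is not identically zero, and the zeros of a non-trivial holomorphic function are isolated, so the critical points of $f$ are isolated.

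The main obstacle is almost purely bookkeeping rather than technique: one must make sure that (i) $\mathcal{R}$ being inside the interior of $\mathcal{D}$ is actually used, so that the accumulation point $\mathbf{p}^\star$ lives in $\mathcal{D}$ and not on its boundary (where harmonicity, and hence the isolated-zero argument for $g$, could fail), and (ii) the application of Proposition~\ref{prop:saddle point} back in the main discussion only needs finitely many critical points inside a compact region around the robot's trajectory, which is exactly what this statement provides. Beyond this careful setup, the proof is a two-line compactness argument once the isolation result is invoked.
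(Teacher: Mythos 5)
Your proof is correct and follows essentially the same route as the paper's: both reduce the claim to the isolation of critical points --- identified as zeros of a non-trivially-vanishing holomorphic function built from $f$ --- and then conclude by compactness of $\mathcal{R}$. The only cosmetic differences are that you phrase the compactness step as a Bolzano--Weierstrass/accumulation-point contradiction rather than a finite subcover, and that you use the globally defined holomorphic function $g=\partial_x f - i\,\partial_y f$ in place of the paper's local harmonic conjugates $u$ (for which $g=u'$), which if anything is slightly cleaner since it avoids the simply-connected neighborhoods.
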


\begin{proof}
For each point \( p \in \mathcal{R} \), there exists a simply connected open neighborhood \( O_p \) of \( p \) such that \( O_p \subset \text{Int}(\mathcal{D}) \). There exists a holomorphic function \( u \) defined on \( O_p \) (viewed as an open subset of \( \mathbb{C} \)) whose real part coincides with \( f \). By the Cauchy Riemann equations, when $\partial f/ \partial x = \partial f/\partial y = 0$, we have $\partial \mathrm{Re}(u)/ \partial x = \partial \mathrm{Re}(u)/\partial y = 0$, which implies $\partial \mathrm{Im}(u)/ \partial x = \partial \mathrm{Im}(u)/\partial y = 0$. This means that the derivative of  u with respect to the complex variable $z=x+iy$ vanishes: $u'(z)=\frac{du}{dz} = 0$. Thus, the critical points of $f$ correspond to the points annihilating \( u' \). Since \( u' \) is holomorphic and not identically zero (because the real part of  \( u \), which is \( f \), is not constant), by the principle of isolated zeros, we conclude that every critical point has an open neighborhood where no other points are critical. By the compactness of \( \mathcal{R} \), we conclude that the critical points in \( \mathcal{R} \) are finite.
\end{proof}

\begin{myProp}\label{prop:safety}
Assume that obstacles $O_1,O_2,…,O_k$ are distributed within $\mathcal{U}_c$. Given a SSTF-CBF as safety function $h(\mathbf{p})$ and under the constraint of (\ref{eq:constraint}), even if the robot enters $\mathcal{U}_c$ through a boundary point $p_0=\mathbf{p}(t_0)$ at time $t_0$ where $h(p_0)=0$ and $\frac{\partial h(p_0)}{\partial p_0}=0$, it will not enter the obstacle region $O_{\mathcal{U}_c}=\bigcup_{i=1}^{k}O_i$ before exiting $\mathcal{U}_c$.
\end{myProp}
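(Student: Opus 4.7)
The plan is to argue by contradiction: suppose the robot reaches the obstacle region at some time $t^{\ast}>t_0$, i.e., $h(\mathbf{p}(t^{\ast}))=-a$, before exiting $\mathcal{U}_c$, and derive an inconsistency with the CBF constraint \eqref{eq:constraint} enforced by the safety controller \eqref{eq:controller}. The physical intuition is that whenever the trajectory is strictly below the level set $\{h=0\}$, the constraint forces $h$ to strictly increase along the motion, so $h$ cannot plunge all the way down to the obstacle value $-a$.

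As a first step I would isolate an interval on which the trajectory is uniformly bounded away from zero on the negative side. Let $\phi(t):=h(\mathbf{p}(t))$. Since $\phi$ is continuous with $\phi(t_0)=0$ and $\phi(t^{\ast})=-a$, the intermediate value theorem justifies setting $t_1:=\sup\{t\in[t_0,t^{\ast}]:\phi(t)=-a/2\}$. By continuity and this supremum choice, $\phi(t_1)=-a/2$ while $\phi(t)<-a/2$ for all $t\in(t_1,t^{\ast}]$. This reduces the problem to the behavior of the trajectory on $[t_1,t^{\ast}]$, where $\phi$ is uniformly bounded above by $-a/2<0$.

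The key step is then to invoke Proposition \ref{prop:saddle point} on the compact set $\overline{\mathcal{U}_c}$, which lies in the interior of the domain where $h$ is a non-constant harmonic function. This guarantees that $h$ has only finitely many critical points in $\overline{\mathcal{U}_c}$, so the set of times $t\in[t_1,t^{\ast}]$ for which $\nabla h(\mathbf{p}(t))=0$ is at most finite and therefore has Lebesgue measure zero. For every $t$ outside this exceptional set, the CBF constraint \eqref{eq:constraint} for the single-integrator model \eqref{eq:simplified model} specializes to $\nabla h(\mathbf{p}(t))\cdot u(t)\geq -\alpha(\phi(t))$, and since $\phi(t)\leq-a/2<0$ together with $\alpha$ strictly increasing and $\alpha(0)=0$ imply $-\alpha(\phi(t))\geq -\alpha(-a/2)>0$, the controller synthesized by \eqref{eq:controller} enforces $\dot\phi(t)=\nabla h(\mathbf{p}(t))\cdot\dot{\mathbf{p}}(t)\geq -\alpha(-a/2)>0$ almost everywhere on $(t_1,t^{\ast}]$.

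Integrating this uniform positive lower bound on $\dot\phi$ using absolute continuity of $\phi$ along the closed-loop trajectory then yields $\phi(t^{\ast})-\phi(t_1)\geq -\alpha(-a/2)(t^{\ast}-t_1)>0$, which forces $-a>-a/2$, a contradiction, so the robot cannot reach $O_{\mathcal{U}_c}$ before leaving $\mathcal{U}_c$. The principal technical obstacle is the third step: at the isolated instants where the trajectory meets a critical point of $h$, the CBF constraint degenerates and provides no information on $\dot\phi$. This is precisely why Proposition \ref{prop:saddle point} is needed, since it confines these pathological instants to a measure-zero subset of the time axis and allows the almost-everywhere lower bound on $\dot\phi$ to be integrated into the pointwise contradiction above.
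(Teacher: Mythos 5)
Your overall strategy --- argue by contradiction and use the CBF constraint to force $\phi(t)=h(\mathbf{p}(t))$ upward whenever it is strictly negative --- is the same as the paper's, but your key technical step has a genuine gap. You claim that because $h$ has finitely many critical points (Proposition~\ref{prop:saddle point}), ``the set of times $t\in[t_1,t^{\ast}]$ for which $\nabla h(\mathbf{p}(t))=0$ is at most finite and therefore has Lebesgue measure zero.'' This inference is false: the preimage of a finite set of points under the continuous map $t\mapsto\mathbf{p}(t)$ is merely a closed set of times and can be an entire interval (e.g., if the robot dwells at a critical point), so finiteness in space does not give finiteness, or even measure zero, in time. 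Worse, at a critical point with $h(\mathbf{p})<0$ the constraint \eqref{eq:controller_b} reads $0\geq-\alpha(h(\mathbf{p}))>0$, i.e., the QP \eqref{eq:controller} is \emph{infeasible}, so the closed-loop input and hence $\dot\phi$ are not even defined there; these instants are not merely ``uninformative'' points you can excise before integrating almost everywhere. (A separate hypothesis violation: you invoke Proposition~\ref{prop:saddle point} on $\overline{\mathcal{U}_c}$, but $\mathcal{U}_c$ contains the obstacle set $O$ where $h\equiv-a$ and $h$ is not harmonic across $\partial O$; the proposition requires a compact set inside the interior of the harmonicity domain, which is exactly why the paper works with the buffered set $I=\{p\in\mathcal{U}_c\mid -a+\epsilon\leq h(p)\leq 0\}$.)

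The paper's proof uses Proposition~\ref{prop:saddle point} for a different purpose that sidesteps all of this: since the critical points in $\mathrm{Int}(I)$ are finitely many, their $h$-values have a maximum $v_{\max}\in(-a+\epsilon,0)$, and there are \emph{no} critical points with value in $(v_{\max},0)$. Taking the threshold to be $v_{\max}$ rather than your arbitrary $-a/2$, the trajectory segment between $t_0$ and the first time $t_2$ at which $\phi=v_{\max}$ satisfies $\phi\in(v_{\max},0]$ and therefore never meets a critical point with negative value; on that segment the constraint is feasible and gives $\dot\phi\geq-\alpha(\phi)\geq0$ pointwise, contradicting $\phi(t_2)=v_{\max}<0=\phi(t_0)$. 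In short, the fix is not to make the bad instants negligible in time but to choose the level so that the trajectory provably never reaches them. If you want to salvage your version, you would need to (i) replace $-a/2$ by a level above all negative critical values (which is exactly $v_{\max}$), or (ii) argue directly that $\dot\phi=\nabla h(\mathbf{p})\cdot\dot{\mathbf{p}}=0$ at critical instants so that $\phi$ is nondecreasing everywhere on the interval --- but (ii) still presupposes a well-defined closed-loop trajectory through points where the QP is infeasible, which the paper's argument never has to assume.
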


\begin{proof}
Since obstacles are present in $\mathcal{U}_c$, the range of $h$ in $\mathcal{U}_c$ is \([-a, 0]\). Define the set $I=\{p\in \mathcal{U}_c|-a+\epsilon\leq h(p)\leq 0\}$, where $\epsilon>0$ is sufficiently small. As $\mathcal{U}_c$
is a connected component of the compact set $\mathcal{U}$, $\mathcal{U}_c$ is also compact. Thus, $I\subseteq \mathcal{U}_c$ is compact. According to Proposition \ref{prop:saddle point}, the number of critical points in $I$ is finite and so is the number of critical points in $\text{Int}(I)$. Since $h(\mathbf{p})$ is continous and non-constant on $I$, by the maximum principle of harmonic functions \cite{axler2001basic}, the extrema of  $h(\mathbf{p})$ in $I$ occurs only on $\partial I$. Therefore, $\forall p \in \text{Int}(I)$, $h(\mathbf{p}) \in (-a+\epsilon, 0)$. Let $v_{max}\in (-a+\epsilon, 0)$ denote the maximum safety value among the critical points in $\text{Int}(I)$. 

We now proceed by contradiction. Suppose the robot enters the obstacle region $h(\mathbf{p}(t_1))=-a$ at time $t_1$ before exiting  $\mathcal{U}_c$. Since $h$ is a is continuous in $\mathbf{p}$ and $\mathbf{p}$ is continous in $t$, $h$ is also continuous in $t$. Thus, before reaching $h(\mathbf{p}(t_1))=-a$, $h$ must attain $v_{max}$ at some earlier time $t_2 \in (t_0, t_1)$, where $t_2$ is the first time at which $h(\mathbf{p}(t))= v_{max}$ after $t_0$. During the motion from $p_0$ to $\mathbf{p}(t_2)$, 
the robot encounter no critical points, and the constraint (\ref{eq:constraint}) remains valid, which implies that
\[\forall t \in (t_0, t_2), \dot{h}(\mathbf{p})=L_{\mathbf{f}}h(\mathbf{p})+L_{\mathbf{g}}h(\mathbf{p})u \geq -\alpha(h(\mathbf{p}))\geq 0.
\]
This contradicts the decrease of $h$ from $h(p_0)=0$ to $h(\mathbf{p}(t_2))=v_{max}$. Hence, the robot can not enter the obstacle region before exiting $\mathcal{U}_c$.
\end{proof}

\section{Experimental Results}
\label{section:experiment}
To valid SSTF-CBF in unknown real-world environments, we conducted experiments using the TurtleBot 3 Burger in both Gazebo simulations and lab scenarios. All computations were performed on a PC with an AMD Ryzen 7 5000H processor and an NVIDIA GeForce RTX 3060 GPU.

\subsection{Nominal Controller and Robot  Model}

We implemented a distance-dependent proportional controller to guide the robot toward the target position$(x_d, y_d)$. This nominal controller generates expected velocities based on the robot's current position $(x_r, y_r)$:
\begin{align}
\left\{
    \begin{array}{l}
    v_{nom, x} = K_p * (x_d - x_r) \\
    v_{nom, y} = K_p * (y_d - y_r)
    \end{array},
\right.
\end{align}
where $K_p = \frac{k}{\sqrt{(x_d - x_r)^2 + (y_d - y_r)^2}}, k \in \mathbb{R}^+$. 
 The velocity command $u = [v_{x},v_{y}]^\top$ is derived by substituting $u_{nom} = [v_{nom, x},v_{nom, y}]^\top$ into \eqref{eq:controller} , ensuring safety while maintaining proximity to the reference.

 \begin{figure}[t]
  \centering
  \includegraphics[scale=0.32]{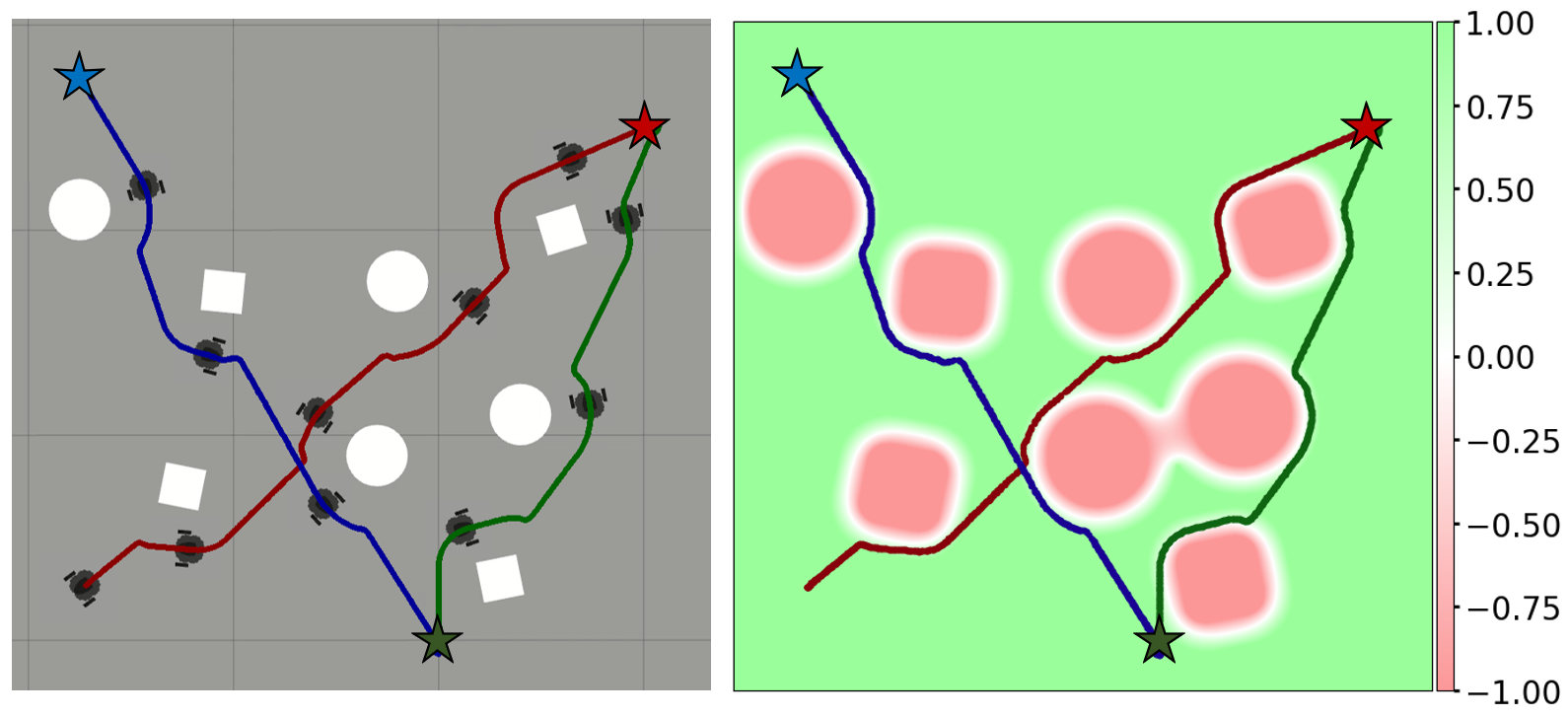}
  \caption{(\textbf{Left}) Obstacle distribution and Turtlebot's trajectory during sequential navigation. (\textbf{Right}) Color map of SSTF-CBF generated by the entire map. The three target points are consistently represented by five-pointed stars in different colors across both figures.}
  \label{fig: sim_traj}
\end{figure}
\begin{figure}[t]
  \centering
  \includegraphics[scale=0.5]{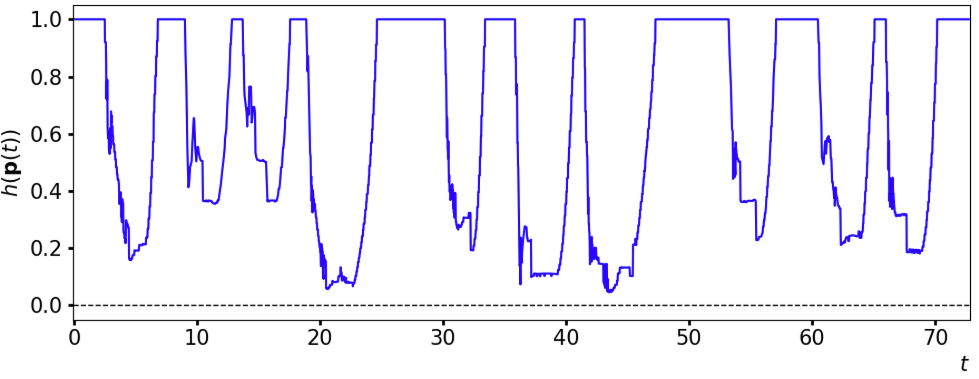}
  \caption{The safety values of the TurtleBot 3 during navigation. These values remain consistently above zero, confirming that the robot operates within the safe region defined by (\ref{eq:safe set}) throughout its motion.}
  \label{fig: sim_h}
\end{figure}

In TurtleBot 3, motion of the robot is described by its center coordinates $(x, y)$ and orientation angle $\theta$, following the unicycle model:
\begin{align}
\begin{bmatrix}
    \dot{x}(t)\\
    \dot{y}(t)\\
    \dot{\theta}(t)
\end{bmatrix}=
\begin{bmatrix}
    \cos(\theta(t)) & 0\\
    \sin(\theta(t)) & 0\\
    0 & 1
\end{bmatrix}
\begin{bmatrix}
    v(t)\\
    \omega(t)
\end{bmatrix},
\end{align}
where $v$ and $\omega$ represent the longitudinal velocity and angular velocity, respectively. 
Control inputs are obtained through a near identity diffeomorphism transformation \cite{wilson2020robotarium}:
\begin{align}
\begin{bmatrix}
    v(t)\\
    \omega(t)
\end{bmatrix}=
\begin{bmatrix}
    \cos(\theta(t)) & \sin(\theta(t))\\
    -\frac{1}{r}\sin(\theta(t)) & \frac{1}{r}\cos(\theta(t))
\end{bmatrix}
\begin{bmatrix}
    v_{x}\\
    v_{y}
\end{bmatrix}.
\end{align}
Here, $r \in \mathbb{R}^+$ denotes the distance from the TurtleBot's center to the wheel axis center.

\subsection{Gazebo Simulations}
First, we design a sequential reaching task in the Gazebo simulator to evaluate the obstacle avoidance capabilities. 
As shown in Figure \ref{fig: sim_traj}, the TurtleBot robot starts at $(1.25\text{ m},1.25\text{ m})$ and sequentially navigates to three goals: $(4 \text{ m}, 3.5 \text{ m}), (3\text{ m}, 1\text{ m})$ and $(1.25\text{ m}, 3.75\text{ m})$. A target is considered reached when the robot is within $0.5$ cm of the destination, triggering the next goal. The $3\text{ m} \times 3\text{ m}$ arena contains four circular obstacles (radius $0.15$ m) and four rectangular obstacles ($0.2 \text{ m} \times 0.2 \text{ m}$).

The robot generates a real-time $200 \times 200$ local occupancy grid map for online SSTF-CBF synthesis, with each grid corresponding to a $1 \text{ cm} \times 1 \text{ cm}$ area. This local map is used for obstacle avoidance during navigation.
Additionally, the robot is aware of its current position. To prevent collisions, the TurtleBot is enclosed within a circular boundary of radius $0.1 \text{ m}$, and its maximum velocity is set to $0.15m/s$.

The TurtleBot's trajectory and the SSTF-CBF color map, which is generated based on the global occupancy grid map for visualization purposes, are also provided in Figure \ref{fig: sim_traj}. The parameters for SSTF-CBF are set to $a = b = 1$ and $\delta = 0.15$ m. To account for the robot's radius, a dilation operation was applied to the OGM before CBF synthesis, causing obstacles in the color map to appear larger than their actual size. For navigation, the function $\alpha(h)=0.15h$ was employed in \eqref{eq:controller_b}. Additionally, the variations in the safety value $h(\mathbf{p})$ were monitored as the robot approached the three target points, as shown in Figure \ref{fig: sim_h}. The results confirm that the robot successfully and safely reaches all target points without collisions. 

\begin{figure*}[t]
    \centering
    % 第一排：3张图片
    \begin{subfigure}{0.32\textwidth}
        \centering
        \includegraphics[width=\textwidth]{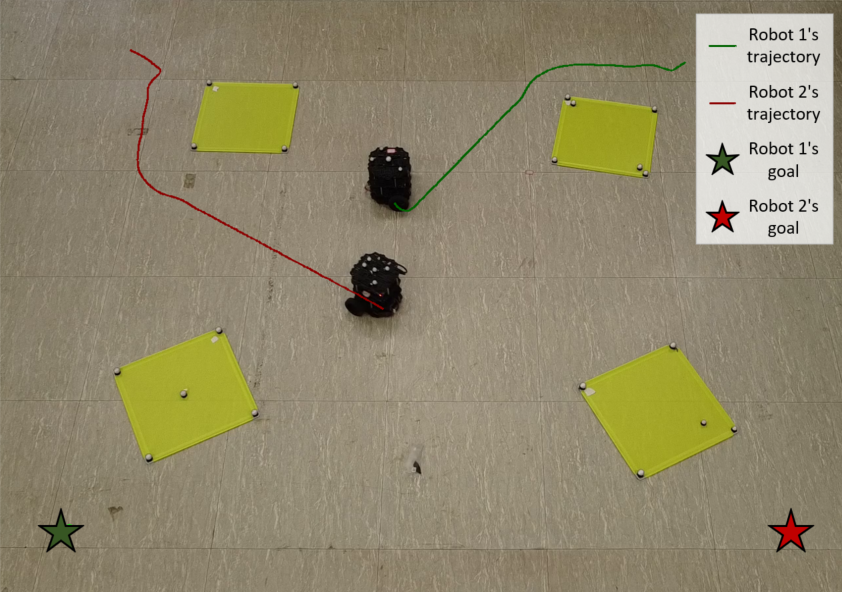}
        \caption{$t=16s$}
    \end{subfigure}
    \hfill
    \begin{subfigure}{0.32\textwidth}
        \centering
        \includegraphics[width=\textwidth]{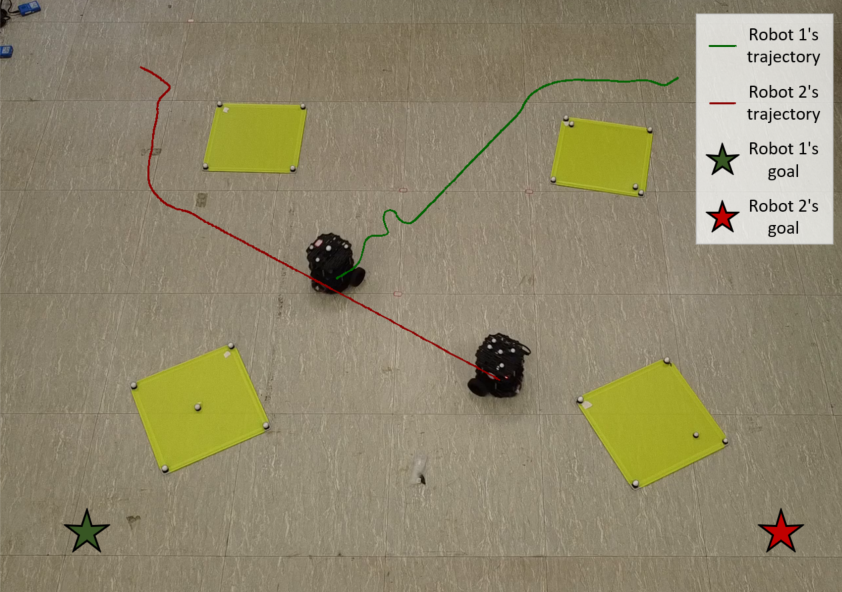}
        \caption{$t=19s$}
    \end{subfigure}
    \hfill
    \begin{subfigure}{0.32\textwidth}
        \centering
        \includegraphics[width=\textwidth]{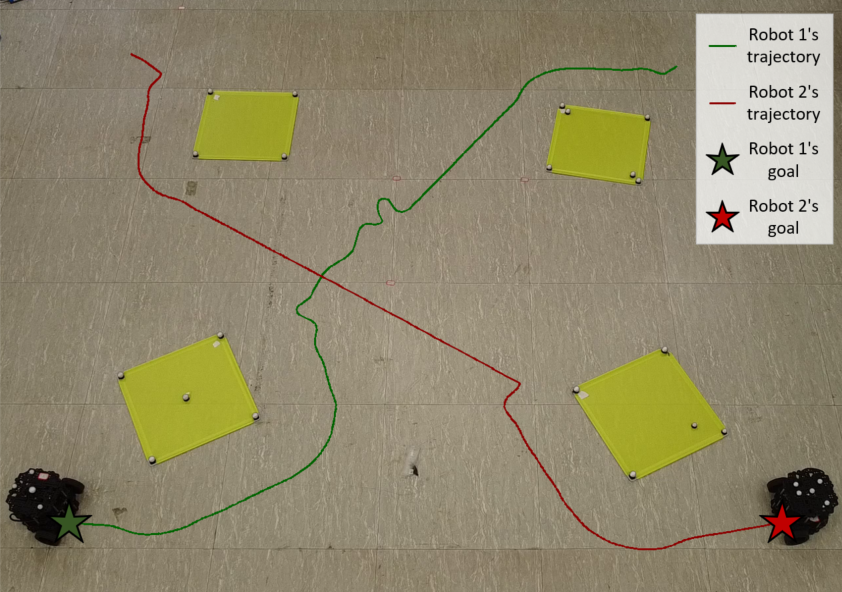}
        \caption{$t=30s$}
    \end{subfigure}
    
    \vspace{0cm}

    % 第二排：6张图片
    \begin{subfigure}{0.144\textwidth}
        \centering
        \adjustbox{raise=1pt}{\includegraphics[width=\textwidth]{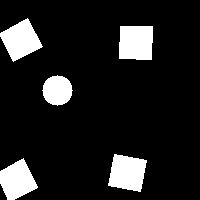}}
        \caption{}
    \end{subfigure}
    \hfill
    \begin{subfigure}{0.175\textwidth}
        \centering
        \includegraphics[width=\textwidth]{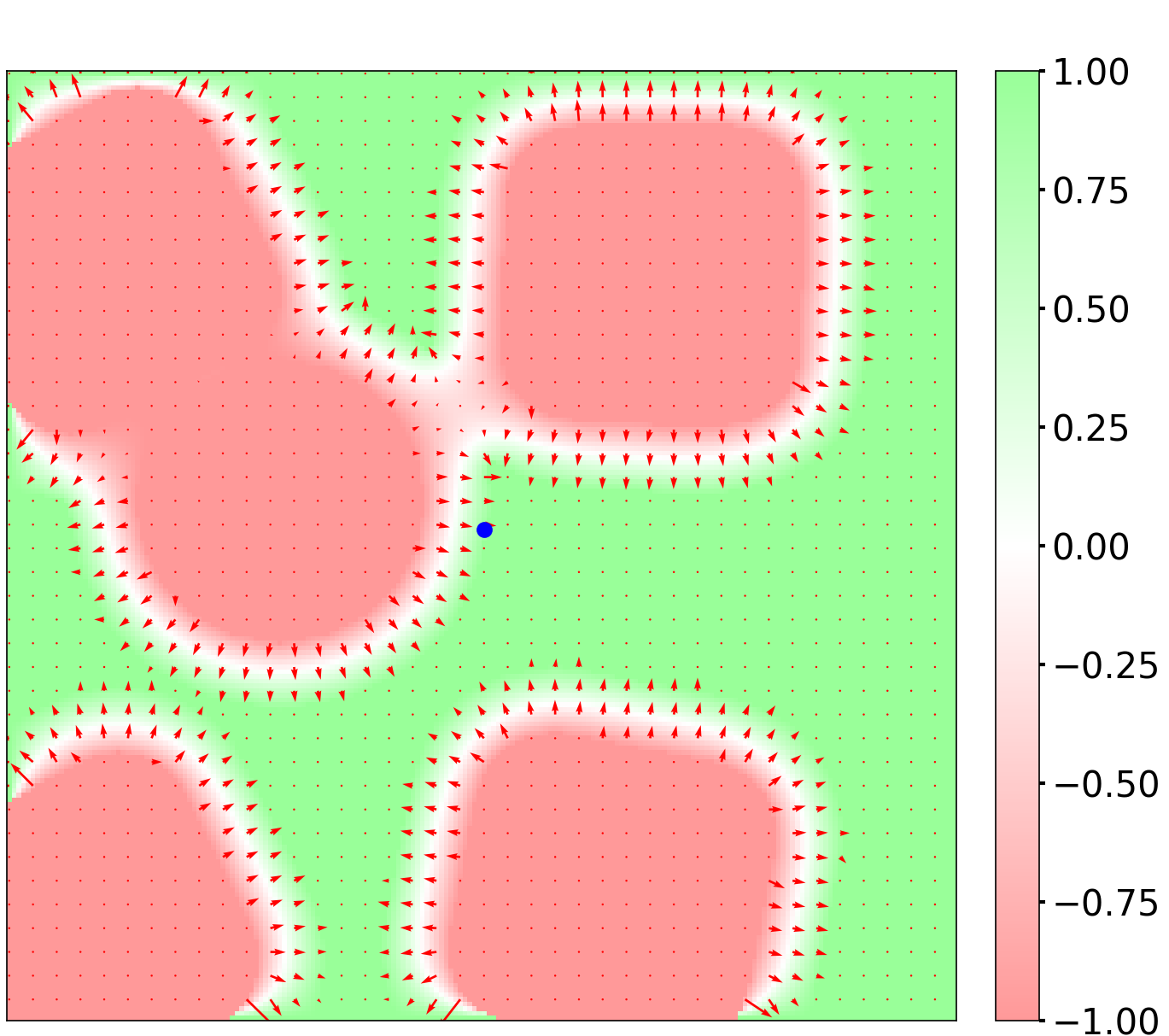}
        \caption{}
    \end{subfigure}
    \hfill
    \begin{subfigure}{0.144\textwidth}
        \centering
        \adjustbox{raise=1pt}{\includegraphics[width=\textwidth]{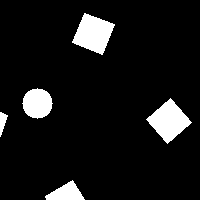}}
        \caption{}
    \end{subfigure}
    \hfill
    \begin{subfigure}{0.175\textwidth}
        \centering
        \includegraphics[width=\textwidth]{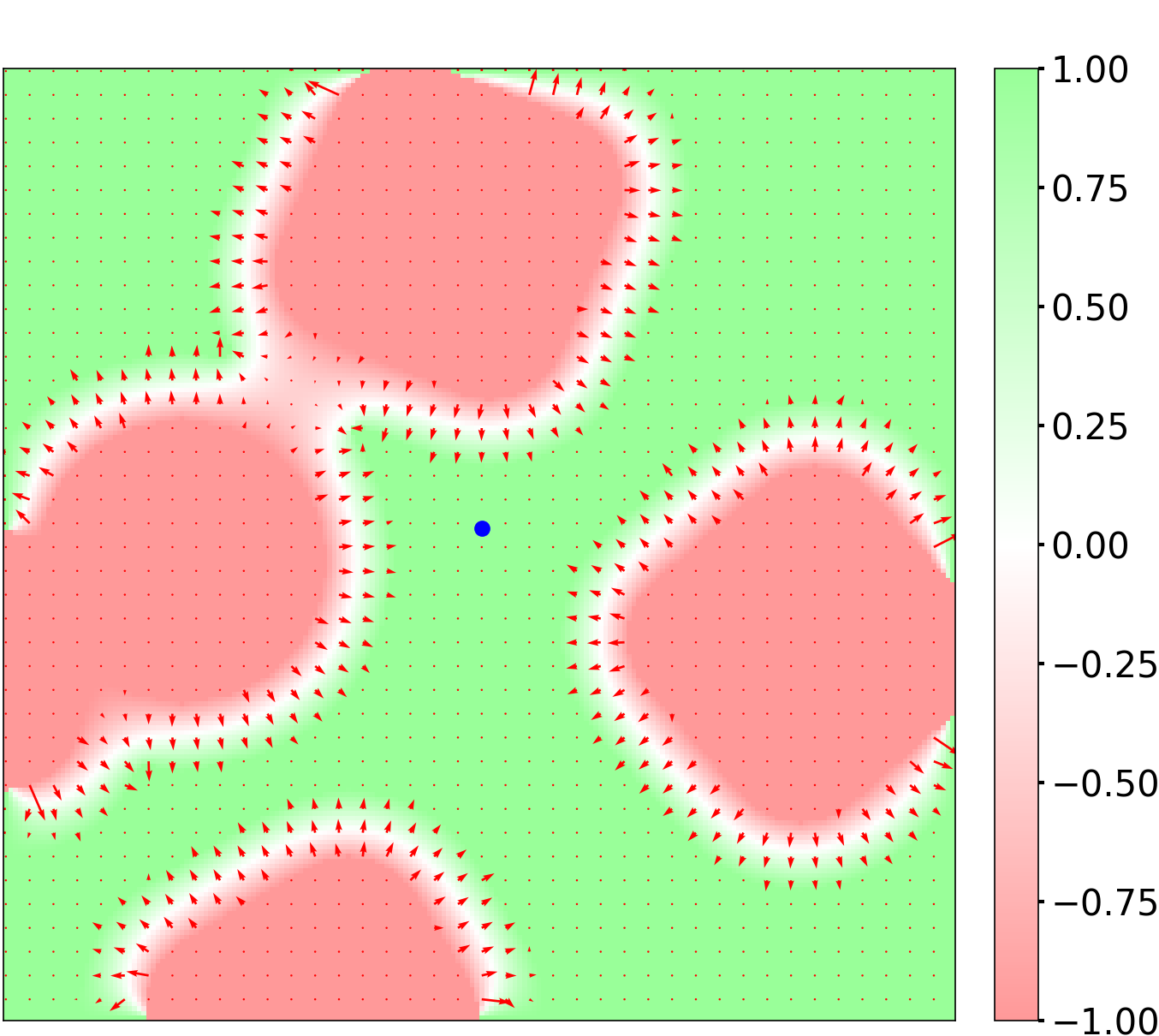}
        \caption{}
    \end{subfigure}
    \hfill
    \begin{subfigure}{0.144\textwidth}
        \centering
        \adjustbox{raise=1pt}{\includegraphics[width=\textwidth]{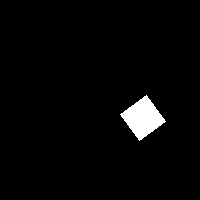}}
        \caption{}
    \end{subfigure}
    \hfill
    \begin{subfigure}{0.175\textwidth}
        \vspace{10pt}
        \centering
        \includegraphics[width=\textwidth]{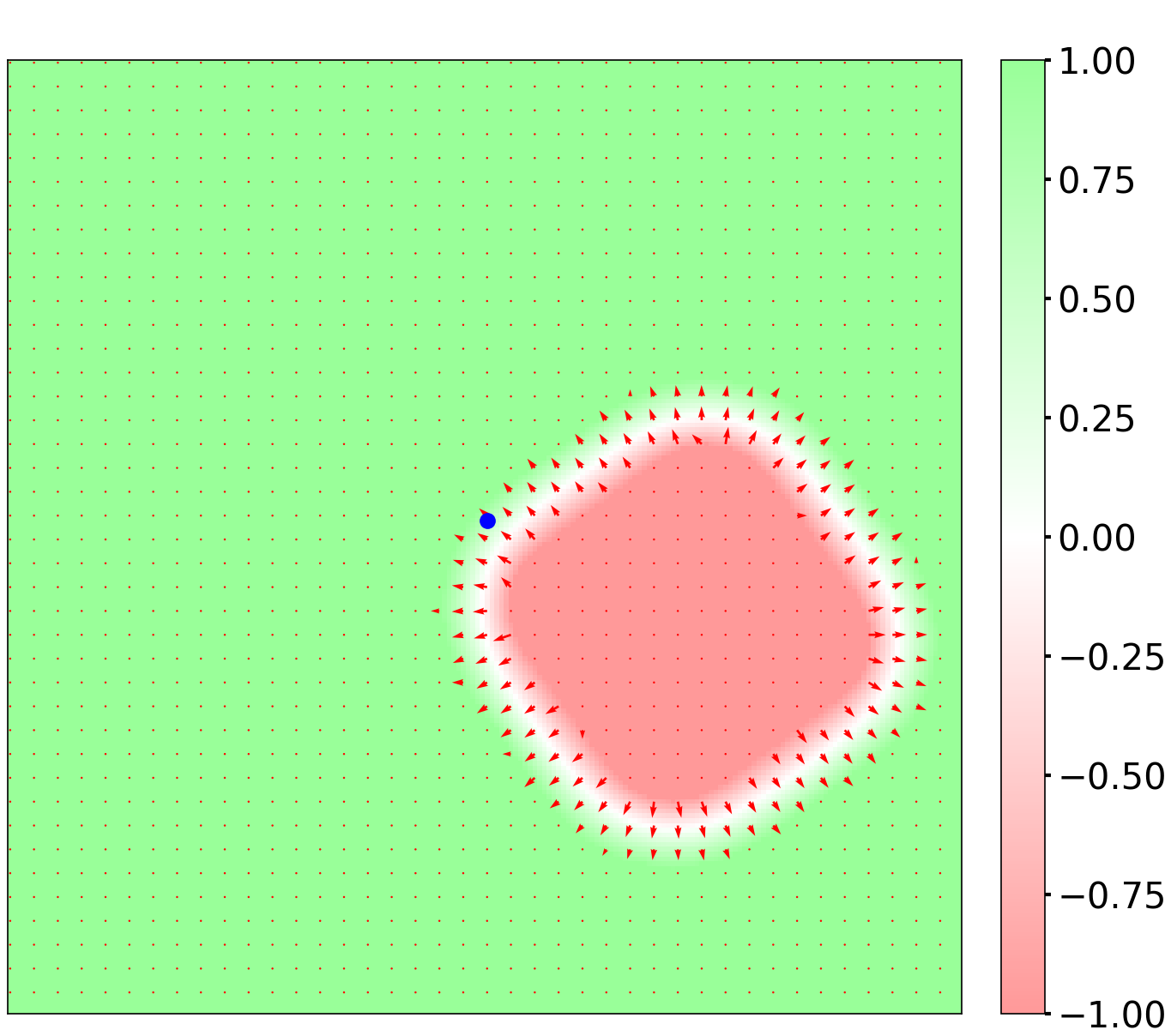}
        \caption{}
    \end{subfigure}

    \vspace{0cm}
    
    % 第三排：6张图片
    \begin{subfigure}{0.144\textwidth}
        \centering
        \adjustbox{raise=1pt}{\includegraphics[width=\textwidth]{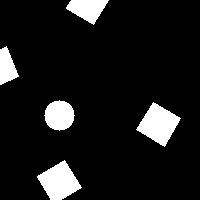}}
        \caption{}
    \end{subfigure}
    \hfill
    \begin{subfigure}{0.175\textwidth}
        \centering
        \includegraphics[width=\textwidth]{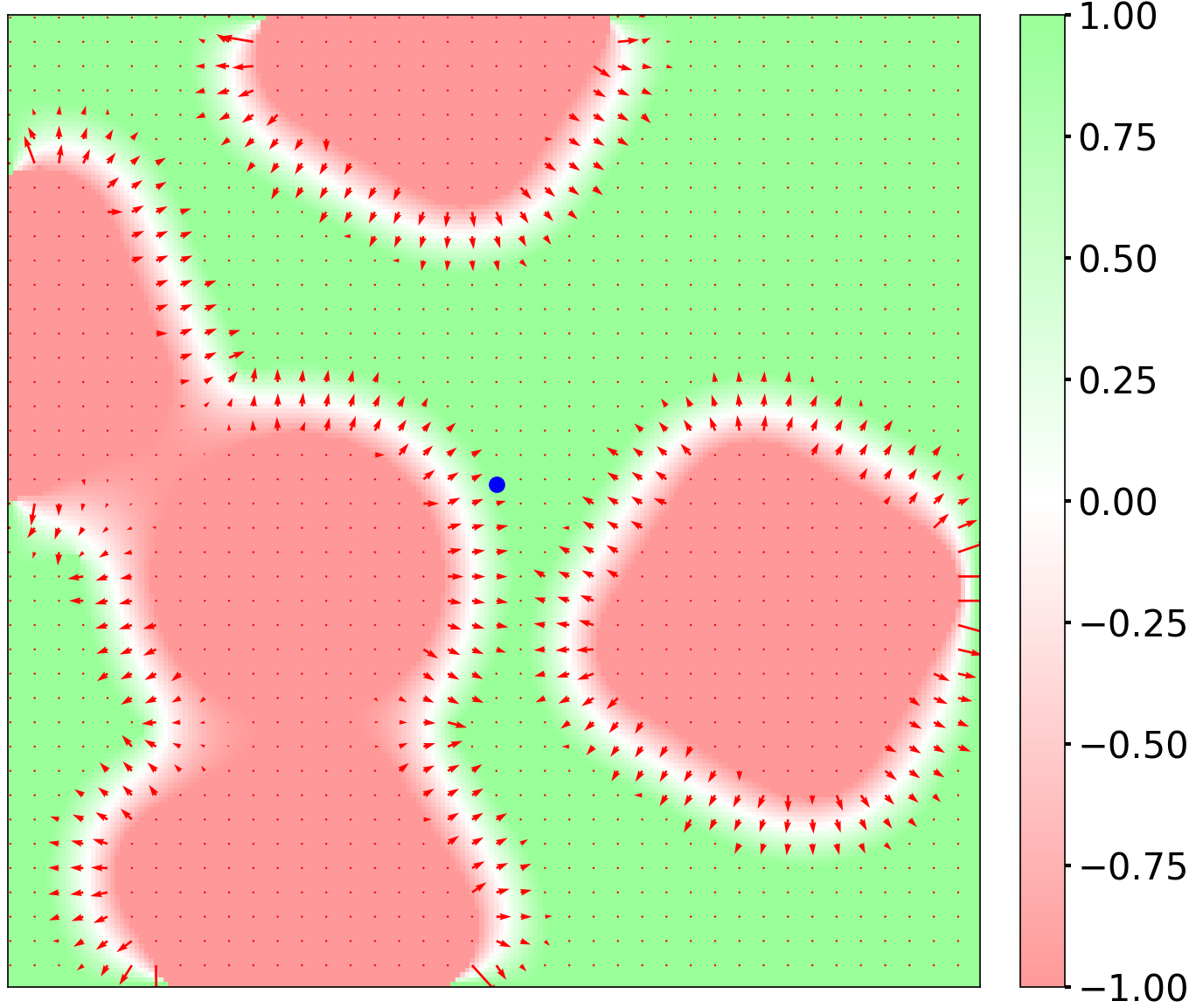}
        \caption{}
    \end{subfigure}
    \hfill
    \begin{subfigure}{0.144\textwidth}
        \centering
        \adjustbox{raise=1pt}{\includegraphics[width=\textwidth]{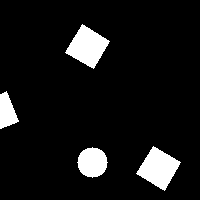}}
        \caption{}
    \end{subfigure}
    \hfill
    \begin{subfigure}{0.175\textwidth}
        \centering
        \includegraphics[width=\textwidth]{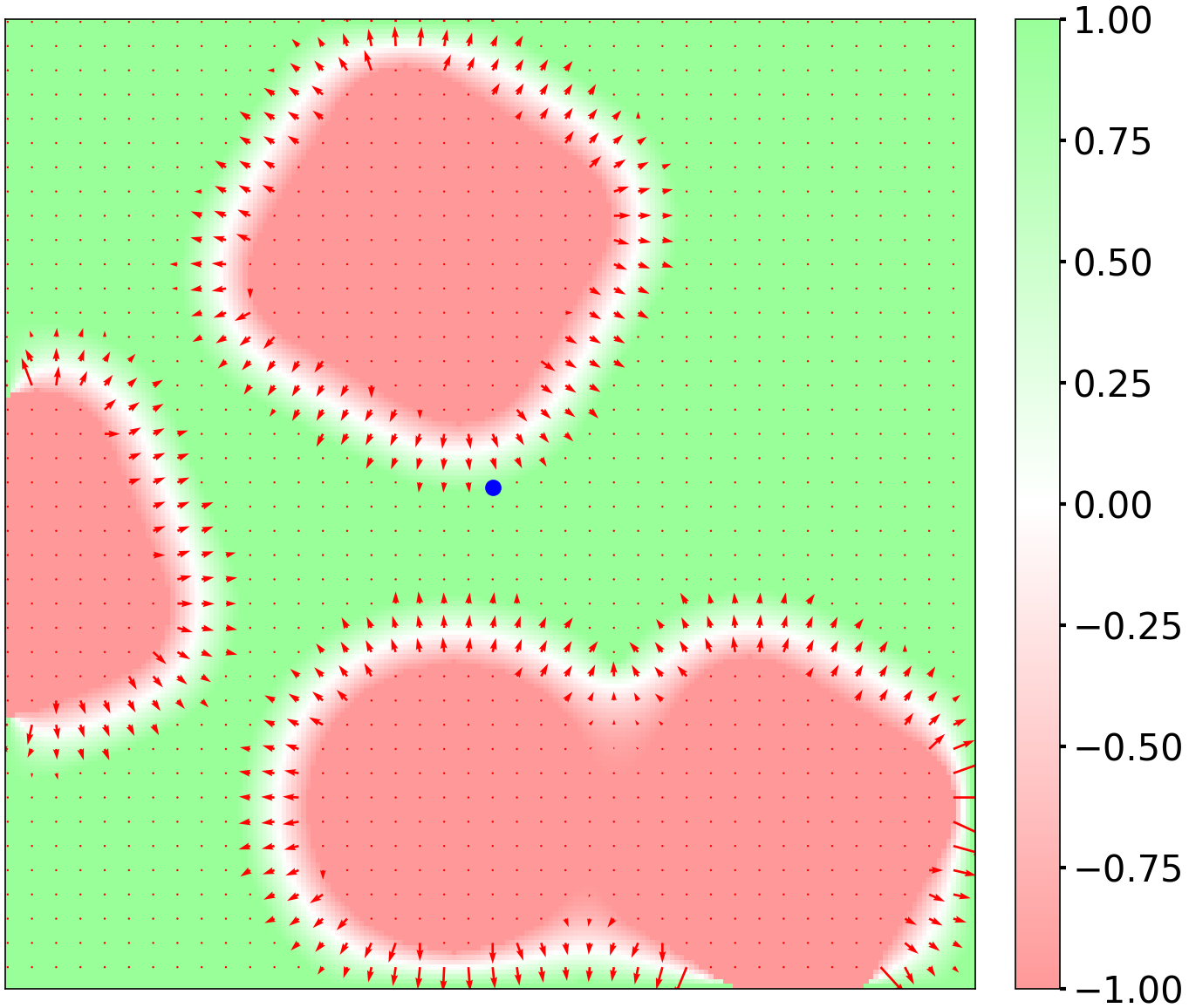}
        \caption{}
    \end{subfigure}
    \hfill
    \begin{subfigure}{0.144\textwidth}
        \centering
        \adjustbox{raise=1pt}{\includegraphics[width=\textwidth]{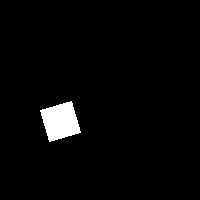}}
        \caption{}
    \end{subfigure}
    \hfill
    \begin{subfigure}{0.175\textwidth}
        \centering
        \includegraphics[width=\textwidth]{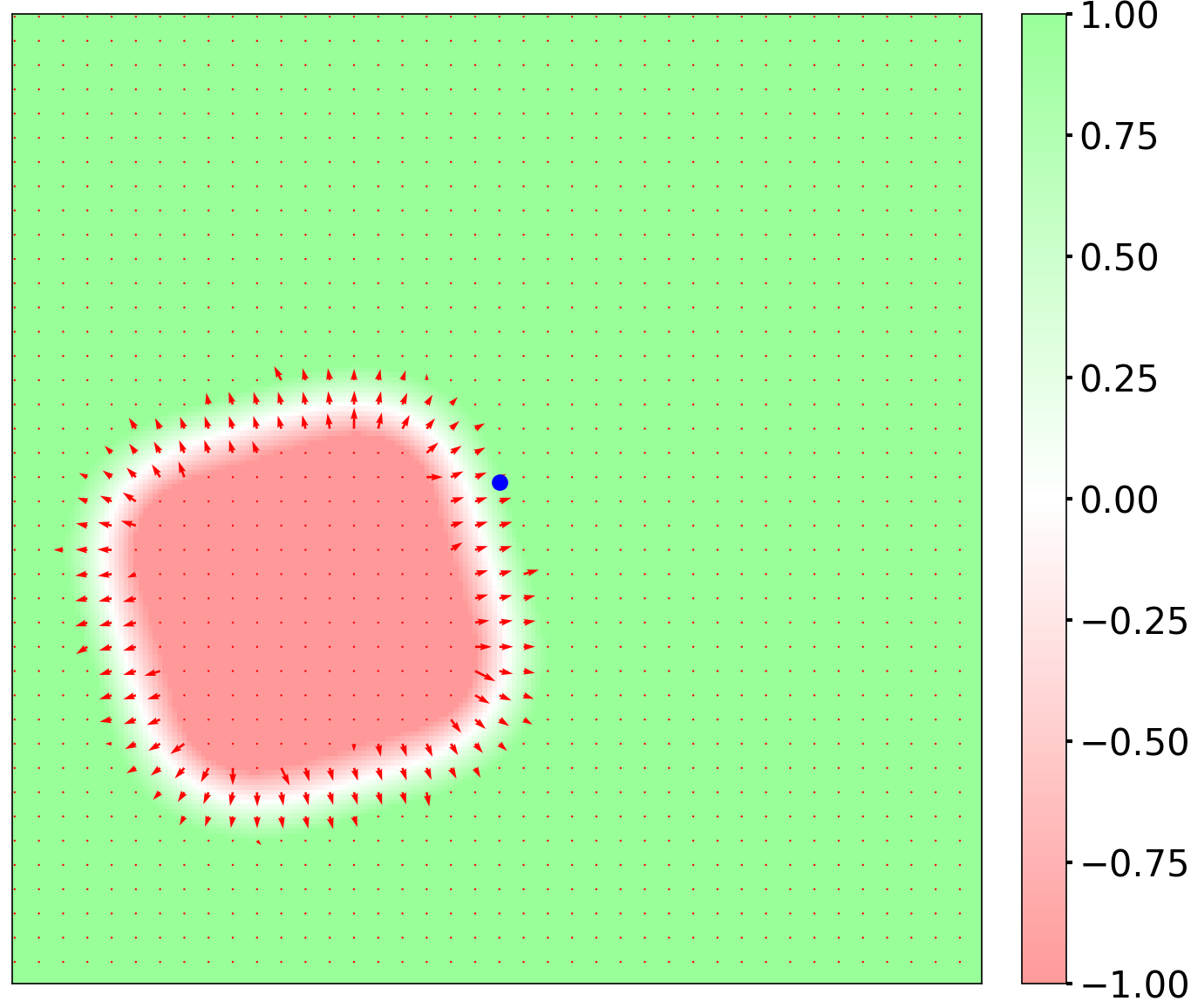}
        \caption{}
    \end{subfigure}
    
    \caption{(a-c) illustrate the trajectories of two robots during navigation. (d-i) present robot 1's local occupancy grid maps (before dilation) along with the corresponding online-generated SSTF-CBFs, synthesized after applying dilation to the OGM by the robot's radius, at time $t=16,19$ and $30$ seconds. Similarly, (j-o) depict the equivalent data for robot 2 at the same time instances. In each CBF color map, red arrows indicate the gradient of the CBF at each position, while the blue dot at the center marks the robot's central position. }
    \label{fig:real}
\end{figure*}
\subsection{Real-World Experiments}
Finally, we conduct the real-world experiment in a $3\text{ m} \times 3\text{ m}$ arena with four yellow $0.35\text{ m}\times 0.35\text{ m}$ static obstacle panels as shown in Figure~\ref{fig:real}. 
Two robots, without any environmental prior knowledge, performed independent navigation tasks toward distinct targets. 

For experimental purpose, we use a {Vicon} motion capture system to provide real-time localization data, enabling the generation of $200 \times 200$ local occupancy grid maps centered on each robot. Each $1 \text{ cm} \times 1 \text{ cm}$ grid cell indicated obstacle presence, with robots treating each other as dynamic obstacles during navigation. Using real-time local OGM, the robots synthesized SSTF-CBF online to ensure safe navigation.

The system utilized ROS as the communication platform, with both robots implementing our algorithm on a single computer that generated and published control commands. The TurtleBots, modeled as circles with a $0.15\text{ m}$ radius, operated at a maximum speed of $0.15m/s$. SSTF-CBF synthesis parameters were set to $a = b = 1$ and $\delta = 0.15$ m, with $\alpha(h)=0.15h$. Figure \ref{fig:real} illustrates the robots' trajectories, demonstrating successful obstacle avoidance and target achievement.

\begin{table}[H]
  \vspace{\fill} % 将图片推到页面底部
  \hspace{\fill}   % 将图片推到页面左侧
    \centering
    \caption{Average Data Size and Computation Time}
    \begin{tabular}{cccc}
        \toprule
         & \multirow{2}{*}{Simulation} & \multicolumn{2}{c}{Real-world} \\
        \cmidrule(lr){3-4}
         & & Robot 1 & Robot 2 \\
        \midrule
        Detected obstacle grids & 1627.95 & 2068.70 & 2181.58 \\
        Grids in region  $\mathcal{T}$ to solve& 6962.60 &  6231.87 & 6439.77 \\
        Time for construct matrix(ms) & 4.98 & 3.55 & 3.50 \\
        Time for solving Laplace(ms) & 4.33 & 9.80 & 9.05 \\
        \bottomrule
    \end{tabular}
    \label{tab:time}
\end{table}

%\XY{Fix the position of these figures}
\subsection{Results and Discussions}
As detailed in Section \ref{section:CBF-Synthesis}, the online SSTF-CBF synthesis, comprises two key steps executed on different processors for optimal efficiency. First, GPU-based boundary condition determination constructs coefficient matrix $A$ for Laplace equation \eqref{eq:Ax=b} by identifying boundary-adjacent grid cells and their neighboring indices within the thermal field. Second, CPU-implemented subspace iterative methods (GMRES or BiCGSTAB) solve the Laplace equation. Table \ref{tab:time} summarizes the average computational time for CBF synthesis in both simulation and real-world experiments. Obstacle-free scenarios are excluded from the computation time analysis, as the ideal controller can be directly applied without requiring additional processing time. 
These experimental results demonstrate the  computational efficiency of our methods, with an average SSTF-CBF synthesis time of $9.31$ ms ($4.98$ ms + $4.33$ ms) in simulations.

% \begin{table}[h]
%     \centering
%     \caption{Data Size and Computation Time in Simulation}
%     \begin{tabular}{cccc}
%         \toprule
%          & Average & Minimum & Maximum\\
%         \midrule
%         Detected obstacle grids & 1627.95 & 422 & 3570\\
%         Grids to solve & 6962.60 & 2396 & 14167\\
%         Time for construct matrix(ms) & 4.98 & 1.28 & 92.15\\
%         Time for solving Laplace(ms) & 4.33 & 0.69 & 132.36\\
%         \bottomrule
%     \end{tabular}
%     \label{tab:sim_time}
% \end{table}

% \begin{table*}[h]
%     \centering
%     \caption{Data Size and Computation Time in Real-world Experiment}
%     \begin{tabular}{ccccccc}
%         \toprule
%          & \multicolumn{3}{c}{Robot 1} & \multicolumn{3}{c}{Robot 2} \\
%         \cmidrule(lr){2-4} \cmidrule(lr){5-7}
%          & Average & Minimum & Maximum & Average & Minimum & Maximum \\
%         \midrule
%         Detected obstacle grids & 2068.70 & 1044 & 4998 & 2181.58 & 1060 & 4105 \\
%         Grids to solve & 6231.87 & 3488 & 11839 & 6439.77 & 3048 & 11244 \\
%         Time for construct matrix(ms) & 3.55 & 1.17 & 74.06 & 3.50 & 1.11 & 87.91 \\
%         Time for solving Laplace(ms) & 9.80 & 2.58 & 157.59 & 9.05 & 2.76 & 128.29 \\
%         \bottomrule
%     \end{tabular}
%     \label{tab:real_time}
% \end{table*}

\section{Conclusion}
\label{section:conclusion}
This paper proposes a new approach for online synthesis of CBFs based on local occupancy grid maps. The main features of the proposed approach are twofold. First, it employs a single CBF constraint to ensure safety across varying obstacle numbers and shapes, simplifying the QP optimization problem and reducing conservatism. Second, inspired by steady-state thermal fields and leveraging the sparsity of the coefficient matrix in Laplace's equation, the safety value can be computed efficiently in real time, even for high-dimensional OGMs inputs. The method's efficacy is demonstrated through simulations and real-world experiments, achieving millisecond-level synthesis times on \(200 \times 200\) local OGMs. Future work will focus on integrating search techniques for multi-step motion planning and extending the approach to multi-robot safety control.

%\addtolength{\textheight}{-12cm}   % This command serves to balance the column lengths
                                  % on the last page of the document manually. It shortens
                                  % the textheight of the last page by a suitable amount.
                                  % This command does not take effect until the next page
                                  % so it should come on the page before the last. Make
                                  % sure that you do not shorten the textheight too much.

%%%%%%%%%%%%%%%%%%%%%%%%%%%%%%%%%%%%%%%%%%%%%%%%%%%%%%%%%%%%%%%%%%%%%%%%%%%%%%%%

%%%%%%%%%%%%%%%%%%%%%%%%%%%%%%%%%%%%%%%%%%%%%%%%%%%%%%%%%%%%%%%%%%%%%%%%%%%%%%%%

%%%%%%%%%%%%%%%%%%%%%%%%%%%%%%%%%%%%%%%%%%%%%%%%%%%%%%%%%%%%%%%%%%%%%%%%%%%%%%%%

%%%%%%%%%%%%%%%%%%%%%%%%%%%%%%%%%%%%%%%%%%%%%%%%%%%%%%%%%%%%%%%%%%%%%%%%%%%%%%%%
\balance
\bibliographystyle{IEEEtranBST/IEEEtran}
\bibliography{ieeeconf/reference}

\end{document}